%% file: acl.tex
\documentclass[11pt]{article}

\usepackage{amsthm}
\usepackage[final]{acl}

\usepackage{times}
\usepackage{latexsym}

\usepackage[T1]{fontenc}

\usepackage[utf8]{inputenc}

\usepackage{microtype}

\usepackage{inconsolata}

\usepackage{graphicx}

\usepackage{listings}
\usepackage{fancyvrb}
\usepackage{fvextra}
\usepackage{enumitem}
\usepackage{standalone}
\usepackage{newtxmath}
\usepackage{wrapfig}
\usepackage{subcaption}
\usepackage{booktabs}
\usepackage{tikz}
\usetikzlibrary{positioning}
\usetikzlibrary{arrows}
\usetikzlibrary{calc,fit}
\usetikzlibrary{shapes.geometric}
\usetikzlibrary{shapes.misc}
\usetikzlibrary{decorations.pathmorphing}
\usetikzlibrary{decorations.pathreplacing}

\newcommand{\eg}{\textit{e.g.}}
\newcommand{\ie}{\textit{i.e.}}

\usepackage[capitalize,noabbrev,nosort]{cleveref}

\theoremstyle{plain}
\newtheorem{theorem}{Theorem}[section]
\newtheorem{proposition}[theorem]{Proposition}

\theoremstyle{definition}

\newtheorem{assumption}[theorem]{Assumption}
\theoremstyle{remark}

\AddToHook{env/proposition/begin}{\crefalias{theorem}{proposition}}
\AddToHook{env/assumption/begin}{\crefalias{theorem}{assumption}}

\makeatletter
\crefname{section}{\S\@gobble}{\S\@gobble}
\crefname{subsection}{\S\@gobble}{\S\@gobble}
\crefname{subsubsection}{\S\@gobble}{\S\@gobble}
\crefname{figure}{Fig.}{Figs.}
\crefname{table}{Table}{Tables}

\def\paragraph{\@startsection{paragraph}{4}{\z@}{0.2em}{-1em}{\normalsize\bf}}
\makeatother

\newcommand{\std}[1]{\scriptsize$\pm$#1}

\DeclareMathOperator*{\argmax}{arg\ max}

\title{Structured Inference with Large Language Gibbs}
\author{
 \textbf{Sanghyeok Choi\textsuperscript{1}}\quad
 \textbf{Henry Gouk\textsuperscript{1}}\quad
 \textbf{Esmeralda S. Whitammer\textsuperscript{1,2}}
\\
\\
 \textsuperscript{1}University of Edinburgh, School of Informatics
 \\
 \textsuperscript{2}CIFAR Fellow, Learning in Machines and Brains
\\
\\
 {
   \tt
   \{sanghyeok.choi, henry.gouk, esmeralda.whitammer\}@ed.ac.uk
 }
}

\begin{document}
\maketitle

\begin{abstract}
\looseness=-1
  The knowledge encoded in large language models (LLMs) can serve as a substrate for structured reasoning over variables describing a complex world, but accessing this knowledge in a probabilistically coherent manner poses a difficult inference problem.
  We propose Large Language Gibbs, a scheme for structured probabilistic inference that uses conditional distributions of an LLM as transition operators.
  Rather than sampling structured objects through single-pass autoregressive generation, we iteratively resample individual variables conditioned on others using an LLM's next-token conditionals.
  This approach avoids order-dependent biases and produces a stationary distribution that reflects a compromise between all local conditionals.
  We apply this approach to sampling from synthetic distributions, consistent reasoning tasks, and Bayesian structure learning.
  The results suggest that the use of LLM conditionals in MCMC is a practical alternative to one-pass generation for structured probabilistic inference under a world prior accessible through noisy LLM conditionals.
  
  Code: \url{https://github.com/hyeok9855/large-language-gibbs}.
\end{abstract}

\section{Introduction}

\begin{figure*}[h]
  \centering
  \newcommand{\metadatacontents}{
    \begin{tikzpicture}
      \node[anchor=west] (x1) at (0,-0) {Breed\vphantom{pdl()}};
      \node[anchor=west] (x2) at (0,-0.5) {Coat\vphantom{pdl()}};
      \node[anchor=west] (x3) at (0,-1) {Age (years)\vphantom{pdl()}};
      \node[anchor=west] (x4) at (0,-1.5) {Sleep (h/day)\vphantom{pdl()}};
      \node[anchor=west] (m1) at (3,-0) {\color{blue}enum(...)\vphantom{pdl()}};
      \node[anchor=west] (m2) at (3,-0.5) {\color{blue}enum(...)\vphantom{pdl()}};
      \node[anchor=west] (m3) at (3,-1) {\color{blue}int[0,30]\vphantom{pdl()}};
      \node[anchor=west] (m4) at (3,-1.5) {\color{blue}int[0,24]\vphantom{pdl()}};
    \end{tikzpicture}
  }

  \newcommand{\samplecontents}[4]{
    \begin{tikzpicture}
      \node[anchor=west] (y1) at (0,0) {#1\vphantom{pdl()}};
      \node[anchor=west] (y2) at (0,-0.5) {#2\vphantom{pdl()}};
      \node[anchor=west] (y3) at (0,-1) {#3\vphantom{p}};
      \node[anchor=west] (y4) at (0,-1.5) {#4\vphantom{p}};
    \end{tikzpicture}
  }
  \includestandalone[width=\linewidth]{figures/gibbs-cat}\\[0.5em]
  \includestandalone[width=\linewidth]{figures/gibbs-cr}
  \caption{\textbf{Above:} Several iterations of large language Gibbs sampling for four variables jointly describing a cat. The initial values are sampled autoregressively, then updates are performed by resampling one variable at a time from the LLM's conditional distribution given the others, serialised in random order (\cref{sec:llg_basic}). Metadata describing the variables can be used to impose constraints on the resampling. \textbf{Below:} Application to consistent reasoning, where each variable is a binary answer to a question (\cref{sec:consistent_reasoning}). Resampling encourages consistency among the answers. For example, the LM's knowledge that `Polaris' and `North Star' refer to the same star may be being imposed in the first resampling step and the knowledge that at most one star is the brightest in the second step.}
  \label{fig:gibbs-examples}
\end{figure*}

Large language models (LLMs) are increasingly used to represent distributions over structured objects, such as label assignments over large input sets \citep{gadetsky2025large,wen2025unsupervised}, tabular data records \citep{borisov2023language}, and other combinatorial objects.
In these settings, where the target sample space is high-dimensional and the joint distribution is constrained by statistical dependencies among variables, the \emph{estimation of the joint distribution} matters as much as the quality of \emph{marginal predictions on a single variable}, which is the focus of most LLM applications.
A common approach is to generate the full object in a single left-to-right pass, but sequential decoding can introduce order-dependent artifacts such as recency effects \citep{lu-etal-2022-fantastically} and ignorance of context \citep{liu-etal-2024-lost}, which may distort the intended joint distribution.

\looseness=-1
In this work, we ask whether using the conditional beliefs of LLMs over individual variables allows extracting more information than autoregressive generation permits.
Concretely, we study Gibbs-style sampling procedures that iteratively resample one variable conditioned on the current values of the others (\cref{fig:gibbs-examples}), motivated by recent work on the use of masked language models as Gibbs transition operators \citep{wang-cho-2019-bert,yamakoshi-etal-2022-probing,torroba-hennigen-kim-2023-deriving} and the prior use of Gibbs sampling for simple structured prediction tasks with LLMs \citep{zhu2024recovering,zhu2025eliciting} and inference of reasoning chains \citep{xu2024reprompting}; see related work in \cref{app:related_work}.
This procedure turns the model into a transition operator over structured states whose stationary distribution reflects a compromise among local conditionals, removing the bias of any fixed generation order.
We propose several ways to define the transition operator using the LLM's next-token conditionals (\cref{sec:llg_basic,sec:discriminative_kernels}) and study their interaction with the chosen model.

\looseness=-1
We demonstrate this \textbf{large language Gibbs} approach on three illustrative applications.
First, for synthetic distributions, we show that sequential prompting can fail to produce conditionally independent and marginally unbiased samples, consistent with recent observations \cite{hopkins2023can,hu2024amortizing,gu2026illusion}, while Gibbs updates can substantially correct this behaviour (\cref{sec:sampling}).
Second, for unsupervised consistent reasoning on benchmarks such as GSM8K and TruthfulQA, we use iterative conditional updates to find coherent answers across sets of related questions (\cref{sec:consistent_reasoning}) and find that this offers a more principled and better-performing alternative to recent algorithms \citep{wen2025unsupervised}.
Third, for Bayesian structure learning from tabular data, we use Gibbs sampling to generate synthetic data and define a world-knowledge-driven prior over causal structures, finding that this can improve structure learning performance on real datasets (\cref{sec:structure_learning}).

\section{Background: Gibbs Sampling}
\label{sec:gibbs}

Consider a random variable $X=(X_1,\dots,X_n)$ taking values in $\mathcal{X}=\mathcal{X}_1\times\dots\times\mathcal{X}_n$, where each $\mathcal{X}_i$ is a finite (or perhaps infinite discrete) space.

It is well-known \citep{geman1984stochastic,mackay2003information} that if the joint probability mass function $p$ of $X$ has full support, then it is the unique stationary distribution of the Markov chain $\left(X^{(t)}\right)_{t=1}^\infty$ defined by the Gibbs kernel, whose transition rule is given by
\begin{equation}\label{eq:gibbs}
\begin{split}
  i&\sim {\rm Unif}(\{1,\dots,n\}),\\
  X_i^{(t)}&\sim p(X_i^{(t)}\mid X_{-i}^{(t-1)}),
  \end{split}
\end{equation}
where $X_{-i}$ indicates that $i$-th variable is omitted from $X$. That is, the kernel randomly resamples one of the variables from its conditional given all other variables. Variants may select $i$ non-uniformly (for example, introducing temporal dependence by resampling variables in a fixed order) or resample multiple variables at a time, known as block Gibbs sampling. See \cref{sec:practical_variants}.

Suppose now that one has a collection of \emph{approximations} $q_i(X_i\mid X_{-i})$ to the unary conditionals of $p$, and define a Markov chain by the resampling operator
\begin{equation}\label{eq:approx_gibbs}
\begin{split}
  i&\sim {\rm Unif}(\{1,\dots,n\}),\\
  X_i^{(t)}&\sim q_i(X_i^{(t)}\mid X_{-i}^{(t-1)}).
\end{split}
\end{equation}
In contrast to \eqref{eq:gibbs}, the estimate $q_i$ is used for resampling instead of the true conditional of $p$. If each $q_i$ has full support, then the kernel is still given by a stochastic matrix whose $n$-th power is positive, so it has a unique stationary distribution, to which iterative resampling using \eqref{eq:approx_gibbs} converges, even if the unary conditionals $q_i$ are not compatible with \emph{any} joint distribution. Computing the density of this stationary distribution, or its marginals, amounts to a $\prod_i|{\cal X}_i|$-dimensional eigenvector computation, but sampling from it can be easily approximated simply by simulating the Markov chain.

Note also that the stationary distribution of \eqref{eq:approx_gibbs} is only one notion among many of a joint distribution that is optimally compatible with a collection of unary conditionals that do not necessarily arise from the same joint. For example, the Arnold-Gokhale construction \citep{arnold1998distributions} reconstructs a joint distribution from conditionals $q_i$ as the one whose unary conditionals are KL-nearest to the $q_i$.

\section{Large Language Gibbs}

We consider approximate Gibbs kernels \eqref{eq:approx_gibbs} in which the $q_i$ are defined by an autoregressive (large) language model. Assume that we have a fixed language model $p^{\text{LM}}$, with next-token distribution $p^{\text{LM}}(w_j\mid w_{1:j-1})$ for any token sequence $w_{1:j-1}$ in a fixed vocabulary. This language model defines a conditional distribution over continuations of any token sequence. 

We are interested in the joint distribution over variables $X_1,\dots,X_n$ that $p^{\text{LM}}$ models. Each variable $X_i$ is assumed to be associated with a natural language context $c_i$ that describes it, and its sample space $\mathcal{X}_i$ is considered to be in bijection with finite-length sequences of tokens under specific constraints (for example, $X_i$ may be an integer in a certain range, which can be represented by numeric tokens with appropriate constraints, or may be forced to belong to a certain set of strings). We assume that the variables possess no inherent semantic ordering.

The main question is how to define a resampling kernel $q_i^{\text{LM}}(X_i\mid X_{-i})$ using the language model, which only gives the conditional distribution over next token(s) given a previous token sequence.

\subsection{Large Language Gibbs: Basic Version}
\label{sec:llg_basic}

We can define a kernel as the expectation of autoregressive kernels under random permutations,
\begin{equation}
  \label{eq:lm_conditional}
  \begin{split}
  &q^{\text{LM}}_i(X_i \mid X_{-i})
  \\&\triangleq \mathbb{E}_{\sigma_{-i}}\left[ p^{\text{LM}}(X_i \mid \texttt{seq}(X_{-i},\sigma_{-i}) \oplus [c_i])\right],
    \end{split}
\end{equation}
where $\sigma_{-i}$ is drawn uniformly from the set of all bijections $\{1,\dots,n-1\}\to\{1,\dots,n\}\setminus\{i\}$, $\oplus$ denotes sequence concatenation, and $\texttt{seq}(X,\sigma)$ serialises the variables in $X$ into a token sequence $[c_{\sigma(1)}, X_{\sigma(1)},\ldots,c_{\sigma(n-1)}, X_{\sigma(n-1)}]$ in the order $\sigma$ (with appropriate separation tokens), possibly prepended with some global task metadata or an instruction prompt.

For example, at the first step of resampling in the example at the top of \cref{fig:gibbs-examples}, the variable index to be resampled is $i=2$ (``Coat''). With a certain $\sigma_{-2}$, we may resample $X_2$ by continuing the following prompt:
\begin{quote}
  \vspace{-5pt}
  \tt\small
  Here are some characteristics of a cat.

  Sleep (h/day, int[0,24]): 12\\
  Breed (enum(...)): shorthair\\
  Age (years, int[0,30]): 3\\
  Coat (enum(...)):
  \vspace{-5pt}
\end{quote}
Note that the variables are shuffled. Practically, to enforce constraints, we can use ``structured output'' generation in modern language modelling software, \eg, vLLM \citep{kwon2023efficient}.

\paragraph{Algorithm.} Unravelling the approximate Gibbs procedure with the kernel described above, the basic large language Gibbs procedure is the following:
\begin{enumerate}[left=0pt,nosep,label=(\arabic*)]
  \item Initialise variables $X_1,\ldots,X_n$ (\eg, using autoregressive generation with $p^{\text{LM}}$).
  \item For $t=1, \dots, T$:
    \begin{enumerate}[left=0pt,nosep,label=(\alph*)]
      \item Sample an index $i$ uniformly from $1, \ldots, n$.
      \item Sample a bijection $\sigma_{-i}$ uniformly from the set of all bijections $\{1, \ldots, n-1\} \to \{1, \ldots, n\} \setminus \{i\}$.
      \item Sample $X_i$ from\\$p^{\text{LM}}(X_i \mid \texttt{seq}(X_{-i},\sigma_{-i}) \oplus [c_i])$.
      \item Yield current value of $X$ as $X^{(t)}$.
    \end{enumerate}
\end{enumerate}
After simulating the Gibbs chain, we collect a certain number of samples using typical MCMC procedures, such as burn-in (forgetting a fixed number of samples from the start of the chain) and thinning (outputting every $K$-th element of the chain).

\paragraph{On the stationary distribution.}

Because the transition kernels have full support, the large language Gibbs chain has a unique stationary distribution, which we denote by $q^*$. As mentioned before, the conditionals defined in \eqref{eq:lm_conditional} are not necessarily compatible with any joint, and in general $q_i^{\text{LM}}(X_i \mid X_{-i}) \neq q^*(X_i \mid X_{-i})$. Below, we construct a natural candidate joint $q^{\text{sym}}$ and identify a sufficient condition under which $q^* = q^{\text{sym}}$.

First, consider a candidate joint distribution
\begin{equation}
  q^{\text{sym}}(X) \triangleq \mathbb{E}_{\sigma}\left[ p^{\text{LM}}(\texttt{seq}(X,\sigma))\right], \label{eq:qsym_def}
\end{equation}
where $\sigma$ is uniform on the set of all bijections from $\{1,\ldots,n\}$ to itself. This is a natural choice when no variable ordering is preferred over others, and since each $p^{\text{LM}}(\texttt{seq}(\cdot,\sigma))$ is a probability density, so is $q^{\text{sym}}$. Operationally, $q^{\text{sym}}$ is the distribution of single-pass autoregressive generation in a random variable order.

A bijection $\sigma_{-i}$ in \eqref{eq:lm_conditional} corresponds to a $\sigma$ in \eqref{eq:qsym_def} satisfying $\sigma(n)=i$. Using this, we fix the last-position index to $i$ and define
\begin{equation}
  q^{\text{sym}}_i(X) \triangleq \mathbb{E}_{\sigma\,:\,\sigma(n)=i}\left[p^{\text{LM}}(\texttt{seq}(X,\sigma))\right], \label{eq:qsym_i_def}
\end{equation}
so $q^{\text{sym}}(X) = \tfrac{1}{n}\sum_{j=1}^{n} q^{\text{sym}}_j(X)$.

We now ask when $q^* = q^{\text{sym}}$. It suffices that the resampling conditionals \eqref{eq:lm_conditional} coincide with the unary conditionals of $q^{\text{sym}}$, which we establish via the chain
\begin{equation}
  \!q_i^{\text{LM}}(\cdot\! \mid X_{-i}) \overset{\text{(a)}}{=} q^{\text{sym}}_i(\cdot\! \mid X_{-i}) \overset{\text{(b)}}{=} q^{\text{sym}}(\cdot\! \mid X_{-i}),\!
  \label{eq:proof_chain}
\end{equation}
which holds under the following assumption.

\begin{assumption}\label{assumption:invariance}
For all $i\in\{1,\ldots,n\}$ and $X$:
\begin{enumerate}[nosep,label=(\alph*)]
\item $p^{\text{LM}}(\texttt{seq}(X_{-i},\sigma_{-i})\oplus[c_i])$
  does not depend on $\sigma_{-i}$;
\item $q^{\text{sym}}_i(X) = q^{\text{sym}}(X)$.
\end{enumerate}
\end{assumption}
Specifically, (a) states that the LM's likelihood of the serialised context is
invariant to the order of the conditioning variables $X_{-i}$; (b) states that the joint likelihood of the variables as estimated by the LM is invariant to the last variable in the order in which they are serialised.

\begin{proposition}
  \label{prop:stationary}
  Under \cref{assumption:invariance}, the conditionals \eqref{eq:lm_conditional} coincide with the unary conditionals of $q^{\text{\rm sym}}$, \ie,
  \begin{equation*}
    q_i^{\text{\rm LM}}(X_i \mid X_{-i}) = q^{\text{\rm sym}}(X_i \mid X_{-i}), \qquad \forall\, i,\,X.
  \end{equation*}
  Consequently, $q^{\text{sym}}$ is invariant under the large language Gibbs kernel; if in addition each $q^{\text{\rm LM}}_i(\cdot \mid X_{-i})$ has full support on $\mathcal{X}_i$, the chain is irreducible and $q^* = q^{\text{\rm sym}}$.
\end{proposition}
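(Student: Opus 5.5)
The plan is to establish the two equalities (a) and (b) in the chain \eqref{eq:proof_chain} separately. From there, invariance follows from the standard fact that a kernel resampling one coordinate from a joint's own unary conditional leaves that joint invariant. Uniqueness then follows from irreducibility.

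For step (a), fix $i$ and $X$. For each $\sigma$ with $\sigma(n)=i$, write $\sigma_{-i}$ for its restriction to $\{1,\dots,n-1\}$. The serialisation factorises autoregressively:
\begin{equation*}
p^{\text{LM}}(\texttt{seq}(X,\sigma)) = p^{\text{LM}}(\texttt{seq}(X_{-i},\sigma_{-i})\oplus[c_i])\, p^{\text{LM}}(X_i\mid \texttt{seq}(X_{-i},\sigma_{-i})\oplus[c_i]).
\end{equation*}
By \cref{assumption:invariance}(a), the first factor is a quantity $Z_i(X_{-i})$ independent of $\sigma_{-i}$; it also does not involve $X_i$. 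Averaging over $\sigma$ with $\sigma(n)=i$, which is the same as averaging over uniform $\sigma_{-i}$, gives $q^{\text{sym}}_i(X)=Z_i(X_{-i})\,q^{\text{LM}}_i(X_i\mid X_{-i})$.

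Summing over $X_i$ gives the marginal $q^{\text{sym}}_i(X_{-i})=Z_i(X_{-i})$, because $q^{\text{LM}}_i(\cdot\mid X_{-i})$ sums to one. One subtlety needs care here: the constraint mechanism must make $p^{\text{LM}}(\cdot\mid\text{context})$ a normalised distribution over $\mathcal{X}_i$, via its token bijection. I take this as implicit in the setup, since the paper treats each $p^{\text{LM}}(\texttt{seq}(\cdot,\sigma))$ as a density. If $Z_i(X_{-i})>0$, dividing yields $q^{\text{sym}}_i(X_i\mid X_{-i})=q^{\text{LM}}_i(X_i\mid X_{-i})$, which is (a).

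For step (b), \cref{assumption:invariance}(b) states $q^{\text{sym}}_i(X)=q^{\text{sym}}(X)$ pointwise in $X$. Summing over $X_i$ equates the marginals on $X_{-i}$, and taking the ratio gives $q^{\text{sym}}_i(\cdot\mid X_{-i})=q^{\text{sym}}(\cdot\mid X_{-i})$. Combining (a) and (b) proves the displayed identity. The only delicate point is conditioning on zero-probability $X_{-i}$. Since $p^{\text{LM}}$ has softmax outputs, all these probabilities should be positive, so I would note that contexts are positive. Alternatively, one can observe that such states contribute nothing to the invariance computation.

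For invariance, let $K_i$ denote the kernel that resamples coordinate $i$ from $q^{\text{sym}}(\cdot\mid X_{-i})$. For any $y$,
\begin{equation*}
\sum_x q^{\text{sym}}(x)K_i(x,y)=\sum_{x_i} q^{\text{sym}}(x_i,y_{-i})\,q^{\text{sym}}(y_i\mid y_{-i}) = q^{\text{sym}}(y),
\end{equation*}
so each $K_i$ preserves $q^{\text{sym}}$. The Gibbs kernel is $\frac1n\sum_i K_i$, a mixture that therefore also preserves it.

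For uniqueness, full support of each $q^{\text{LM}}_i$ means any state can reach any other in at most $n$ steps by changing one coordinate at a time, each step having positive probability. The chain is therefore irreducible, and an irreducible chain has at most one stationary distribution, so $q^*=q^{\text{sym}}$. In the countably infinite case, irreducibility together with the existence of the stationary law $q^{\text{sym}}$ still gives uniqueness (positive recurrence). The main obstacle is step (a): correctly identifying the marginal of $q^{\text{sym}}_i$ with the $\sigma$-invariant context likelihood, which is exactly where Assumption (a) is used to pull the prefix probability out of the average.
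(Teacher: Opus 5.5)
Your proposal is correct and follows essentially the same route as the paper. Both arguments use Assumption~(a) to pull the $\sigma_{-i}$-invariant prefix likelihood out of the permutation average, so that $q_i^{\text{LM}}(\cdot\mid X_{-i})\propto q^{\text{sym}}_i(X)$ in $X_i$; both then normalise to get step (a), use Assumption~(b) for step (b), and conclude via the standard Gibbs invariance argument plus irreducibility. Your version only makes explicit what the paper leaves implicit: the normalising constant $Z_i(X_{-i})$, the one-coordinate invariance computation, and the countable-state uniqueness argument.
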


\begin{proof}
  By (a) of \cref{assumption:invariance}, for all $i$, we have
  \begin{equation}
  \begin{split}
  &q_{i}^{\text{LM}}(X_i \!\mid\! X_{-i})
  \\&\propto \mathbb{E}_{\sigma_{-i}}\left[p^{\text{LM}}(\texttt{seq}(X_{-i},\sigma_{-i})\oplus [c_i,X_i])\right]
  \\&=q^{\text{sym}}_i(X).
  \end{split}
  \end{equation}
  The proportionality is in $X_i$; both sides being normalised in $X_i$, $q_i^{\text{LM}}(X_i \mid X_{-i})=q^{\text{sym}}_i(X_i \mid X_{-i})$. Now, since $q^{\text{sym}}_i(X) = q^{\text{sym}}(X)$ under (b), $q^{\text{sym}}_i(X_i \!\mid\! X_{-i}) = q^{\text{sym}}(X_i \!\mid\! X_{-i})=q^{\text{LM}}_i(X_i \!\mid\! X_{-i})$. Compatibility of the unary conditionals with $q^{\text{sym}}$ then makes $q^{\text{sym}}$ stationary by the standard Gibbs argument; positivity gives irreducibility, hence uniqueness, \ie, $q^* = q^{\text{sym}}$.
\end{proof}

We emphasise that, in general, \cref{assumption:invariance} does not hold, and $q^*$ and $q^{\text{sym}}$ are therefore two different distributions. As noted above, $q^{\text{sym}}$ is simply the distribution of autoregressive generation under a random variable ordering and can therefore be sampled from directly. The stationary distribution $q^*$, in contrast, is defined only implicitly by the Gibbs procedure, which can be seen as a compromise among all the LM's unary conditionals. The central idea of this work is to use $q^*$ to elicit the LM's beliefs, under the hypothesis that an autoregressive LM encodes its knowledge more faithfully in its conditional distributions than in any single generation pass. Our empirical results in \cref{sec:applications} support this hypothesis, with $q^{\text{sym}}$ serving as the main baseline (AR-Random).

\begin{figure*}
  \centering
  \captionsetup[subfigure]{labelformat=empty}
  \begin{subfigure}[b]{0.8\linewidth}
    \includegraphics[width=1.0\linewidth]{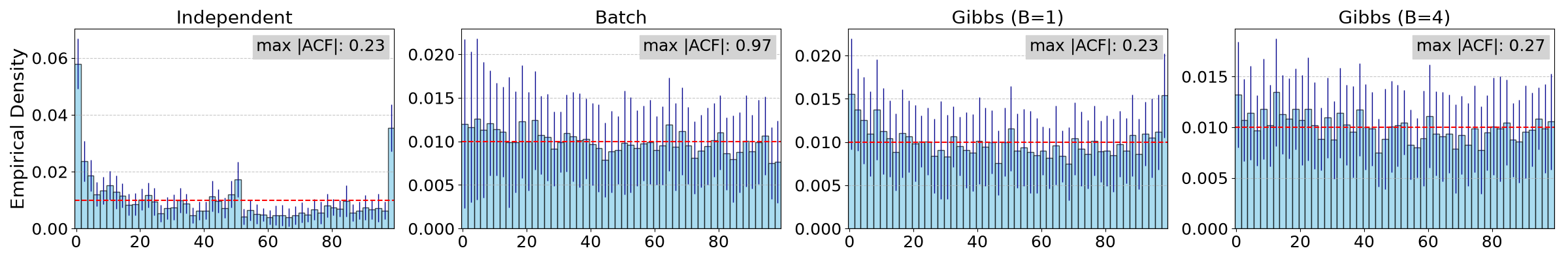}
  \end{subfigure}
  \begin{subfigure}[b]{0.8\linewidth}
    \includegraphics[width=1.0\linewidth]{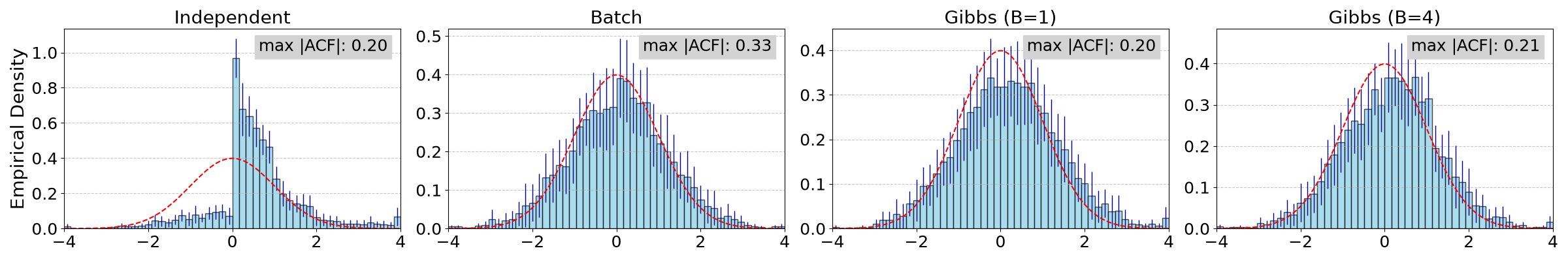}
  \end{subfigure}
  \vspace*{-0.5em}
  \caption{Empirical distribution of generated samples from \textbf{Llama-3.1-8B (base)} for target distribution as $U[\{0,1,\ldots,99\}]$ (top) and $\mathcal{N}(0, 1)$ (bottom). Results are aggregated over 25 seeds, with 256 samples generated per seed. Maximum autocorrelation over 1-128 lags and over 25 runs is also reported ($\max|\text{ACF}|$). Independent sampling has high bias; batch sampling has high autocorrelation. Gibbs substantially corrects both issues.} 
  \label{fig:sampling_base}
\end{figure*}

\begin{figure*}
  \centering
  \captionsetup[subfigure]{labelformat=empty}
  \begin{subfigure}[b]{1.0\linewidth}
    \includegraphics[width=1.0\linewidth]{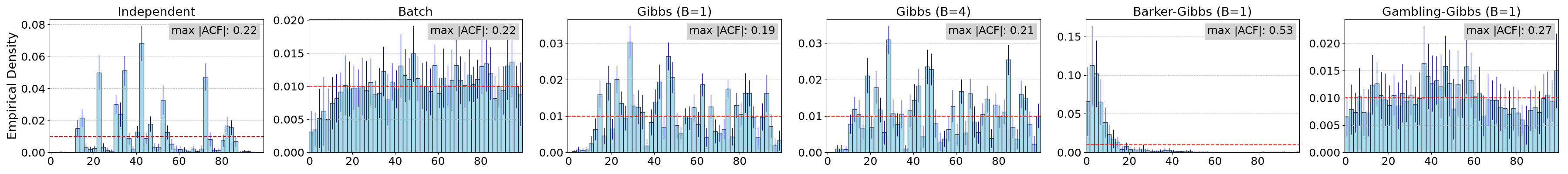}
  \end{subfigure}
  \begin{subfigure}[b]{1.0\linewidth}
    \includegraphics[width=1.0\linewidth]{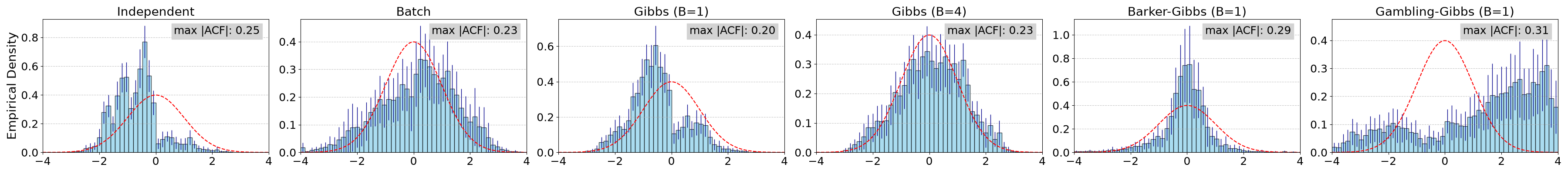}
  \end{subfigure}
  \vspace*{-1.5em}
  \caption{Empirical distribution of generated samples from \textbf{Llama-3.1-8B-Instruct}, same format as \cref{fig:sampling_base}.}
  \label{fig:sampling_instruction_tuned}
  \vspace*{-0.5em}
\end{figure*}

\subsection{Kernels with Discriminative Prediction}
\label{sec:discriminative_kernels}
The basic large language Gibbs procedure in \cref{sec:llg_basic} directly uses the language model's \emph{generative} conditional $p^{\text{LM}}(X_i \mid \texttt{seq}(X_{-i},\sigma_{-i}) \oplus [c_i])$ as the resampling kernel. Depending on the language model type (\eg, instruction-tuned vs. base model, see \cref{sec:sampling} for motivating examples), the LM's discriminative prediction may be more reliable than generation, as reinforcement learning fine-tuning can lead to miscalibrated probabilities \citep{swamy2026all}. This motivates Metropolis-within-Gibbs kernels in which a candidate is drawn from a simple proposal (\eg, uniform) and accepted via the LM's discriminative output.

The algorithm is exactly the same as before (\cref{sec:llg_basic}) except that we replace the resampling step (2c) with the following:
\begin{enumerate}[nosep,label=(\roman*),left=0pt]
  \item Draw a candidate $X_i' \sim q(\cdot)$ from a proposal $q$ on $\mathcal{X}_i$ (taken to be uniform throughout).
  \item Accept or reject $X_i'$ by querying the language model.
\end{enumerate}
We consider two ways to implement the acceptance step (ii).

\paragraph{Barker Gibbs.} Present the language model with the current value $X_i$ and the candidate value $X_i'$, and ask it to \emph{choose the one that is more plausible} given $X_{-i}$ (serialised in a random order). Let $p^{\text{LM}}\left(X_i' \succ X_i \mid X_{-i}\right)$ be the probability of $X_i'$ being preferred by the language model. If we assume that the language model is calibrated in the sense of Barker's rule \citep{barker1965monte}, \ie,
\begin{equation}
\begin{split}
  &p^{\text{LM}}\left(X_i' \succ X_i \mid X_{-i}\right) \\&= \frac{q^*(X_{-i}, X_i')}{q^*(X_{-i}, X_i) + q^*(X_{-i}, X_i')},
  \end{split}\label{eq:barker_rule}
\end{equation}
where $q^*$ is the stationary distribution of the basic large language Gibbs, then the algorithm (combined with the uniform proposal) also has $q^*$ as the stationary distribution. The usage of Barker's rule is also proposed for MCMC with LLMs in \citet{zhu2024recovering}, and we extend it by combining it with the Gibbs sampling framework.

\paragraph{Gambling Gibbs.} Here, we cast the decision of accepting $X_i'$ or not as a gamble. Along with the random candidate $X_i'$, we sample $V \sim U[0,100]$ and ask the language model \emph{whether it would place a bet} $\$V$ that $X_i'$ is more plausible than $X_i$, which will pay out $\$100$ if correct. If the model places a bet, then we accept $X_i'$. Again, if we assume the language model's internal belief of plausibility of $X_i$ is proportional to $q^*(X_{-i}, X_i)$, it should place the bet only if $\frac{V}{100} < \frac{q^*(X_{-i}, X_i')}{q^*(X_{-i}, X_i) + q^*(X_{-i}, X_i')}$, \ie,
\begin{equation}
  \begin{split}
    &p^{\text{LM}}\left(X_i' \succ X_i \mid X_{-i}\right) \\
    &= \mathbb{P}\left(\frac{V}{100} < \frac{q^*(X_{-i}, X_i')}{q^*(X_{-i}, X_i) + q^*(X_{-i}, X_i')}\right),
  \end{split}
\end{equation}
which recovers the Barker acceptance probability \eqref{eq:barker_rule}. The advantage of Gambling Gibbs over Barker Gibbs is that the language model is required only to provide a deterministic threshold judgment, which can be obtained by greedy decoding and does not rely upon calibrated probabilities.

\subsection{Practical Variants}
\label{sec:practical_variants}

Variations on the basic Gibbs sampling from \cref{sec:gibbs} are possible. We describe two that we consider in combination with the large language Gibbs kernels.

\paragraph{Sweeping Gibbs.} Instead of randomly selecting a variable to resample at each step, we can cycle through the variables in a fixed order, a common practice in Gibbs sampling that can lead to faster convergence. If the resampling kernels are exact conditionals of some target distribution, then this procedure has that target as its stationary distribution. However, in the case of approximate conditionals that are not compatible with any joint, the stationary distribution may be different from that obtained with random variable resampling.

\paragraph{Block Gibbs.} One can resample sets of $B>1$ variables at a time from an approximation to their conditional given the others. The large language Gibbs kernel is readily generalised to this case: one serialises the remaining $n-B$ variables in random order and places them in the context, then autoregressively resamples the $B$ variables to be updated. Such resampling can help the chain escape local modes in the case of highly correlated variables. Similar results to \cref{prop:stationary}, characterising the stationary distribution, can be obtained for the block Gibbs case. The corresponding assumptions generalising \cref{assumption:invariance} require that the joint likelihood symmetrised over all permutations equals the joint likelihood symmetrised only over permutations where the sets of the first $n-B$ and last $B$ indices are fixed.

\section{Motivating Example: Sampling from Simple Distributions}
\label{sec:sampling}

Multiple studies in the literature report that LLMs are not able to sample from simple distributions over integers and real numbers, such as uniform and Gaussian \citep{hopkins2023can,hu2024amortizing,gu2026illusion}. Notably, \citet{gu2026illusion} showed that batched sampling with LLMs (generating a batch of samples sequentially in one shot) gives better marginal distributions than independent sampling (generating one sample at a time). However, the batch samples sometimes have high autocorrelation, meaning that the samples are not independent despite having more accurate marginals.

Here, we use large language Gibbs and its variants to sample from simple distributions: the uniform distribution over integers from 0 to 99, \ie, $U[\{0,1,\ldots,99\}]$, and the standard Gaussian distribution $\mathcal{N}(0, 1)$. We use Llama-3.1-8B \citep{grattafiori2024llama}, both base and instruction-tuned variants, and prompt the language model to generate independent samples from the target distribution with different sampling strategies (independent, batch, and Gibbs sampling). For Gibbs algorithms, we arbitrarily choose the number of variables to be 16, \ie, $X = (X_1, \ldots, X_{16})$, where each $X_i$ is expected to be an \emph{i.i.d.} sample from the target distribution. We also experiment with the block-Gibbs scheme (\cref{sec:practical_variants}), updating $B=4$ variables at a time. For details, see \cref{app:setups:sampling}.

\cref{fig:sampling_base} shows the empirical distribution of samples from the base model with different sampling approaches. Independent sampling has high error and batch sampling has high autocorrelation, confirming the findings in \citet{gu2026illusion}. Gibbs sampling is more faithful than independent sampling and has lower autocorrelation than batch sampling. We also report the results from the OLMo-3 32B models \citep{olmo2025olmo} in \cref{app:results:sampling:olmo} and results on additional target distributions in \cref{app:results:sampling:different_targets}.

\cref{fig:sampling_instruction_tuned} shows the results from an instruction-tuned model. From the empirical distribution of independent sampling, we observe that instruction-tuning introduces some bias, which affects batch and Gibbs sampling results as well. Gambling-Gibbs does not sample from the target distribution and instead seems to accept or reject randomly; we suspect this is because models of this scale are not good at the gambling task with numerical values. As shown in  \cref{fig:gambling_gibbs_reasoning} of \cref{app:results:sampling:improved_gambling_gibbs}, we can partially resolve this issue by using a larger model and chain-of-thought reasoning, which reveals that the model attempts to deduce the correct acceptance rate by computing density ratios (\cref{sec:cots}).

\section{Applications} \label{sec:applications}

We now demonstrate the large language Gibbs approach on two applications: unsupervised consistent reasoning (\cref{sec:consistent_reasoning}) and LLM-derived priors for Bayesian structure learning (\cref{sec:structure_learning}). In both cases, we show that large language Gibbs can substantially improve over methods that do not use iterative resampling.

\begin{table}[t]
  \centering
  \resizebox{1\linewidth}{!}{
    \begin{tabular}{@{}lcccc}
      \toprule
      Task $\rightarrow$ & \multicolumn{2}{c}{TruthfulQA} & \multicolumn{2}{c}{GSM8K-Verification} \\
      \cmidrule(lr){2-3} \cmidrule(lr){4-5}
      Alg. $\downarrow$ LLM $\rightarrow$ & \small Llama-3.1-8B & \small OLMo-3-32B & \small Llama-3.1-8B & \small OLMo-3-32B \\
      \midrule
      Zero-shot (greedy) & 0.588 & 0.610 & 0.533 & 0.604 \\
      \midrule
      AR-Fixed, $n\!=\!4$ & 0.654\std{0.013} & 0.656\std{0.008} & 0.658\std{0.013} & 0.695\std{0.010} \\
      \phantom{AR-Fixed,} $n\!=\!16$ & 0.678\std{0.009} & 0.669\std{0.009} & 0.700\std{0.003} & 0.741\std{0.009} \\
      \phantom{AR-Fixed,} $n\!=\!64$ & 0.714\std{0.003} & 0.696\std{0.004} & 0.757\std{0.012} & 0.783\std{0.003} \\
      \midrule
      AR-Random, $n\!=\!4\!\!\!$ & 0.670\std{0.005} & 0.680\std{0.008} & 0.693\std{0.008} & 0.770\std{0.003} \\
      \phantom{AR-Random,} $n\!=\!16\!\!\!$ & 0.701\std{0.005} & 0.680\std{0.004} & 0.735\std{0.006} & 0.772\std{0.007} \\
      \phantom{AR-Random,} $n\!=\!64\!\!\!$ & 0.720\std{0.010} & 0.706\std{0.013} & 0.792\std{0.011} & 0.803\std{0.006} \\
      \midrule
      ICM, $n\!=\!64$ & 0.701\std{0.010} & 0.709\std{0.014} & 0.699\std{0.020} & 0.787\std{0.016} \\
      \midrule
      Gibbs, $n\!=\!4$ & 0.708\std{0.010} & 0.664\std{0.002} & 0.710\std{0.009} & 0.779\std{0.005} \\
      \phantom{Gibbs,} $n\!=\!16$ & 0.715\std{0.008} & 0.693\std{0.008} & 0.786\std{0.006} & 0.812\std{0.005} \\
      \phantom{Gibbs,} $n\!=\!64$ & \textbf{0.744\std{0.010}} & \textbf{0.776\std{0.006}} & \textbf{0.888\std{0.006}} & \textbf{0.869\std{0.003}} \\
      \bottomrule
    \end{tabular}
  }
  \vspace*{-0.5em}
  \caption{Task accuracies on the consistent reasoning tasks (mean$\pm$std over five seeds). Runtime information is given in \cref{app:results:consistent_reasoning:runtime}.}
  \vspace*{-1em}
  \label{tab:consistent_reasoning}
\end{table}

\subsection{Consistent Reasoning}
\label{sec:consistent_reasoning}

Recent works \citep{gadetsky2025large,wen2025unsupervised} have demonstrated that LLMs are able to perform \emph{unsupervised} in-context learning to consistently reason over a dataset of examples by maximising some objective that aggregates conditional probabilities. The key intuition behind these approaches is that finding a set of answers to a set of queries that are consistent with the language model's internal belief \emph{and with each other} can be useful for downstream tasks. Here, we provide a more principled probabilistic point of view on this problem and a solution using large language Gibbs.

Let $c_1,\dots,c_n$ be a set of queries, such as input questions in a question-answering or mathematical reasoning task, and let $X = (X_1, \ldots, X_n)$ be an aligned set of variables representing their (unknown) answers. Both \citet{gadetsky2025large} and \citet{wen2025unsupervised} define an optimisation problem that maximises the product of language model's conditional probabilities over each variable given the others, \ie,
\begin{equation}\label{eq:weird_joint}
  \argmax_{X} \prod_{i=1}^{n} \tilde{q}_i^{\text{LM}}(X_i \mid X_{-i}),
\end{equation}
where $\tilde{q}_i^{\text{LM}}$ is a conditional distribution of $X_i$ given $X_{-i}$ with a specific way of prompting the language model to answer one question $c_i$ with all other questions $c_{-i}$ and their answers $X_{-i}$ placed in context. Since solving the problem is not trivial, the previous approaches either impose a specific ordering of the variables \citep{gadetsky2025large} or use a simulated-annealing-like heuristic \citep{wen2025unsupervised}. Unfortunately, \emph{the product of conditionals in \eqref{eq:weird_joint} does not naturally correspond to the joint likelihood in any probabilistic model}: even if the $\tilde q_i$ are the conditionals of some joint distribution, the product does not recover this joint.

We argue that large language Gibbs allows a more principled and straightforward approach to this problem. The algorithm is exactly the same as described in \cref{sec:llg_basic}, where the input queries $c_i$ are treated as the variable descriptions (\cref{fig:gibbs-examples}, bottom).

\paragraph{Setup.} We borrow the benchmark tasks from \citet{wen2025unsupervised}, which include TruthfulQA and GSM8K-Verification (see \cref{app:setups:consistent_reasoning} for examples). The goal is to classify a given claim (candidate answer) for a question as true or false using a language model. Since there are 4 claims per question, we define $X$ to have $n=4m$ variables, where $m$ is the number of questions considered together at a time. Each $X_i$ in $X$ is a binary variable indicating whether the associated question-claim pair is true or false. We use $m=1,4,16$ (\ie, $n=4,16,64$). Using a larger $m$ could be helpful if the questions are related to each other, in a similar sense to in-context learning \citep{brown2020language}.

We use two base models: Llama-3.1-8B \citep{grattafiori2024llama} and OLMo-3-1125-32B \citep{olmo2025olmo}. We consider Zero-shot, AR-Fixed and AR-Random (autoregressive generation in a fixed and random order, respectively), and Internal Coherence Maximisation \citep[ICM;][]{wen2025unsupervised} as the baseline algorithms.\footnote{The original ICM proposed in \citet{wen2025unsupervised} also uses formal consistency checks that assume access to ground truth labels, but we remove them for a fair comparison of the probabilistic inference algorithms. See \cref{app:setups:consistent_reasoning} for details.} For the large language Gibbs algorithms, we use the basic version (\cref{sec:llg_basic}). See \cref{app:setups:consistent_reasoning} for more detailed explanations of the overall setting and baseline implementations and \cref{app:results:consistent_reasoning:instruct} for results with instruction-tuned models.

\paragraph{Results.} \cref{tab:consistent_reasoning} shows the results. We find, consistently with \citet{gadetsky2025large,wen2025unsupervised}, that language models can be improved by an iterative procedure that promotes internal consistency, surpassing zero-shot or autoregressive baselines. Large language Gibbs with $n=64$ outperforms ICM with the same context length by a large margin in all settings. The monotonic performance gain from AR-Fixed to AR-Random and then Gibbs confirms the importance of shuffling the variables for better consistency and the benefit of iterative resampling.

\begin{figure*}
  \centering
  \captionsetup[subfigure]{labelformat=empty}
  \begin{subfigure}[b]{0.49\linewidth}
    \includegraphics[width=1.0\linewidth]{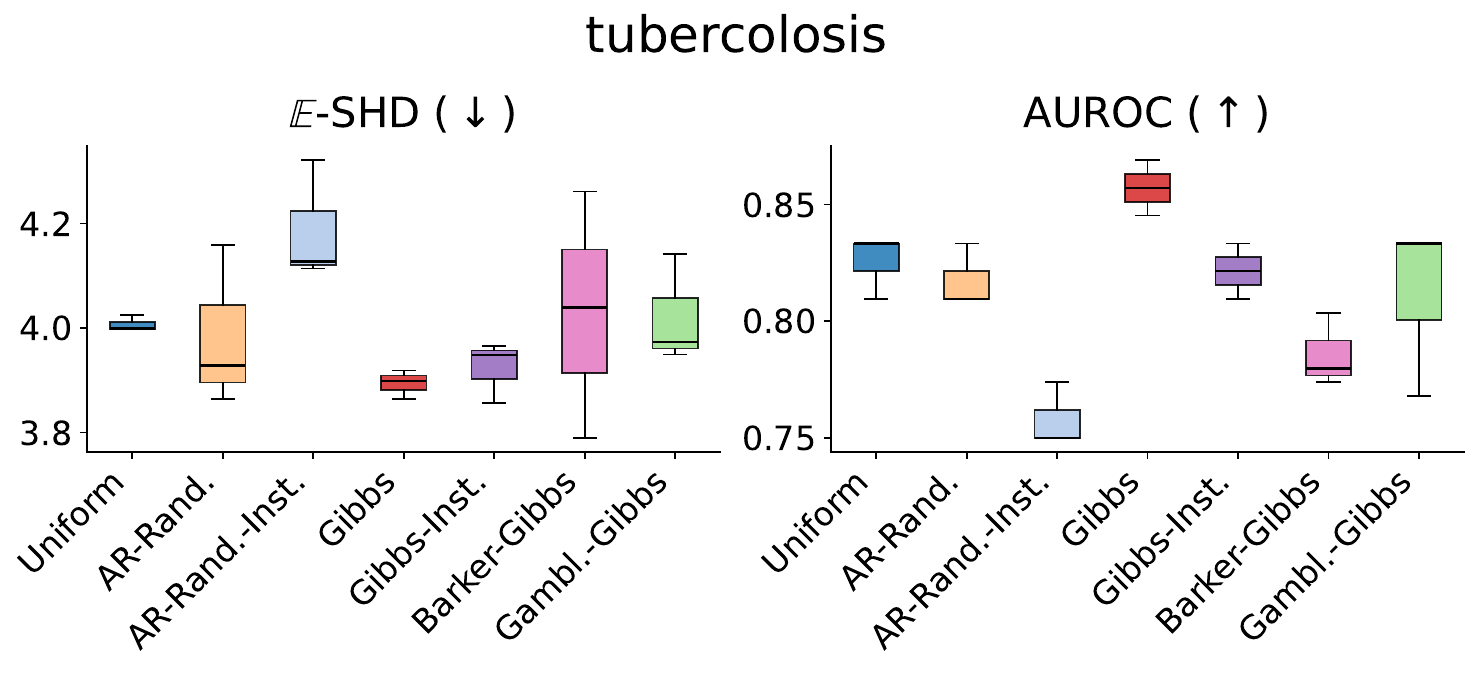}
  \end{subfigure}
  \begin{subfigure}[b]{0.49\linewidth}
    \includegraphics[width=1.0\linewidth]{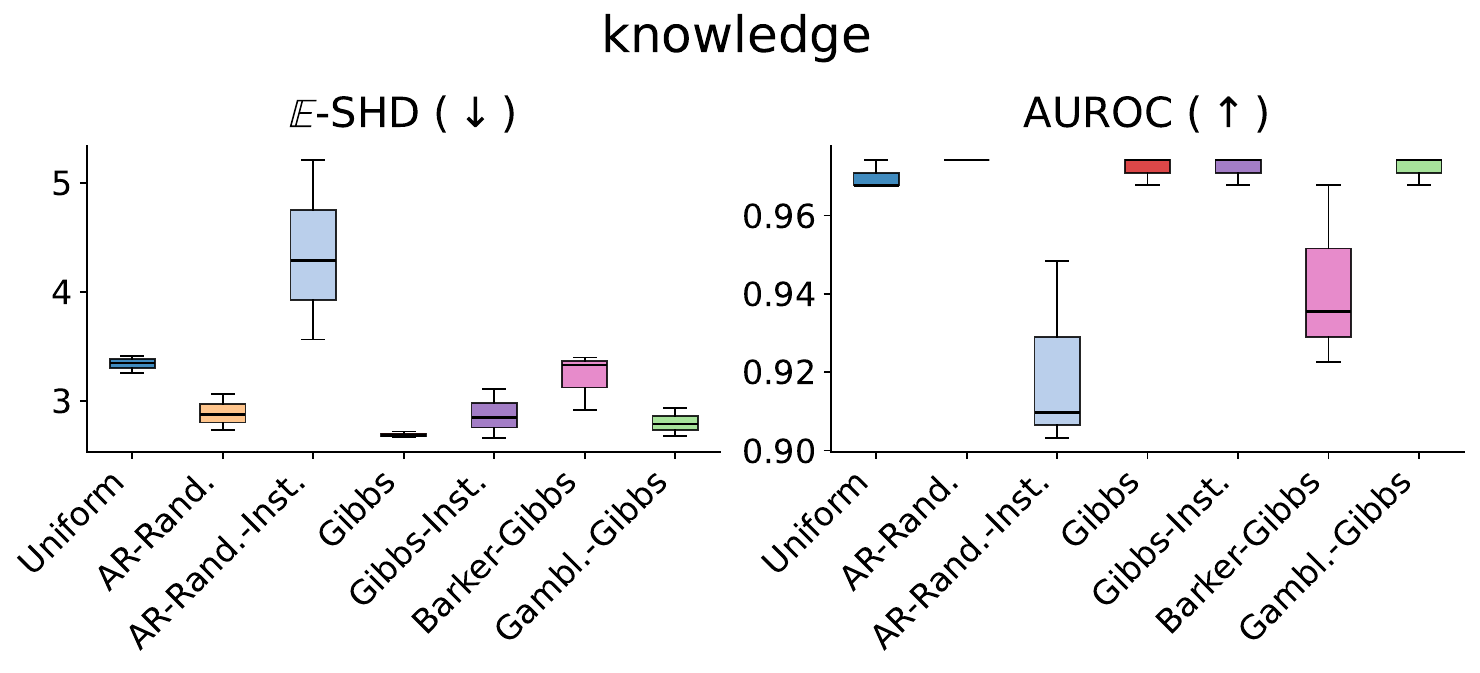}
  \end{subfigure}
  \begin{subfigure}[b]{0.49\linewidth}
    \includegraphics[width=1.0\linewidth]{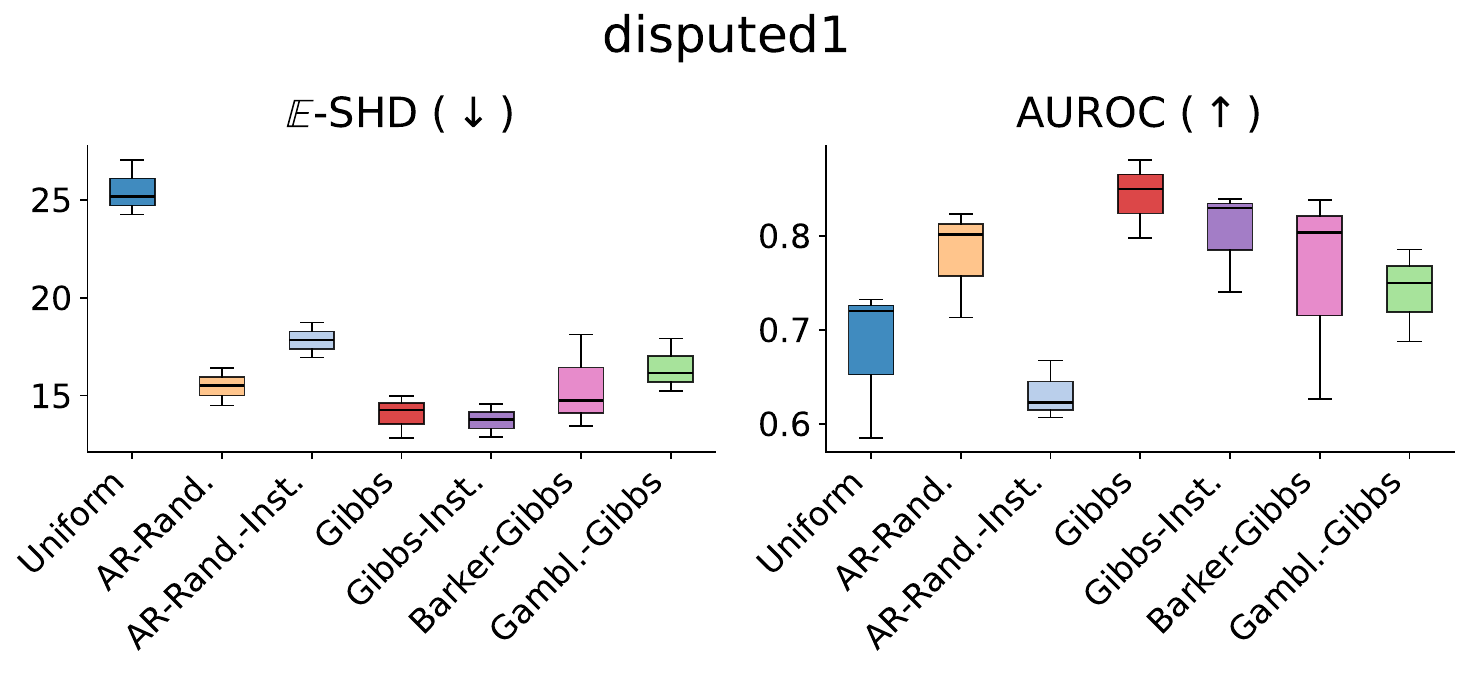}
  \end{subfigure}
  \begin{subfigure}[b]{0.49\linewidth}
    \includegraphics[width=1.0\linewidth]{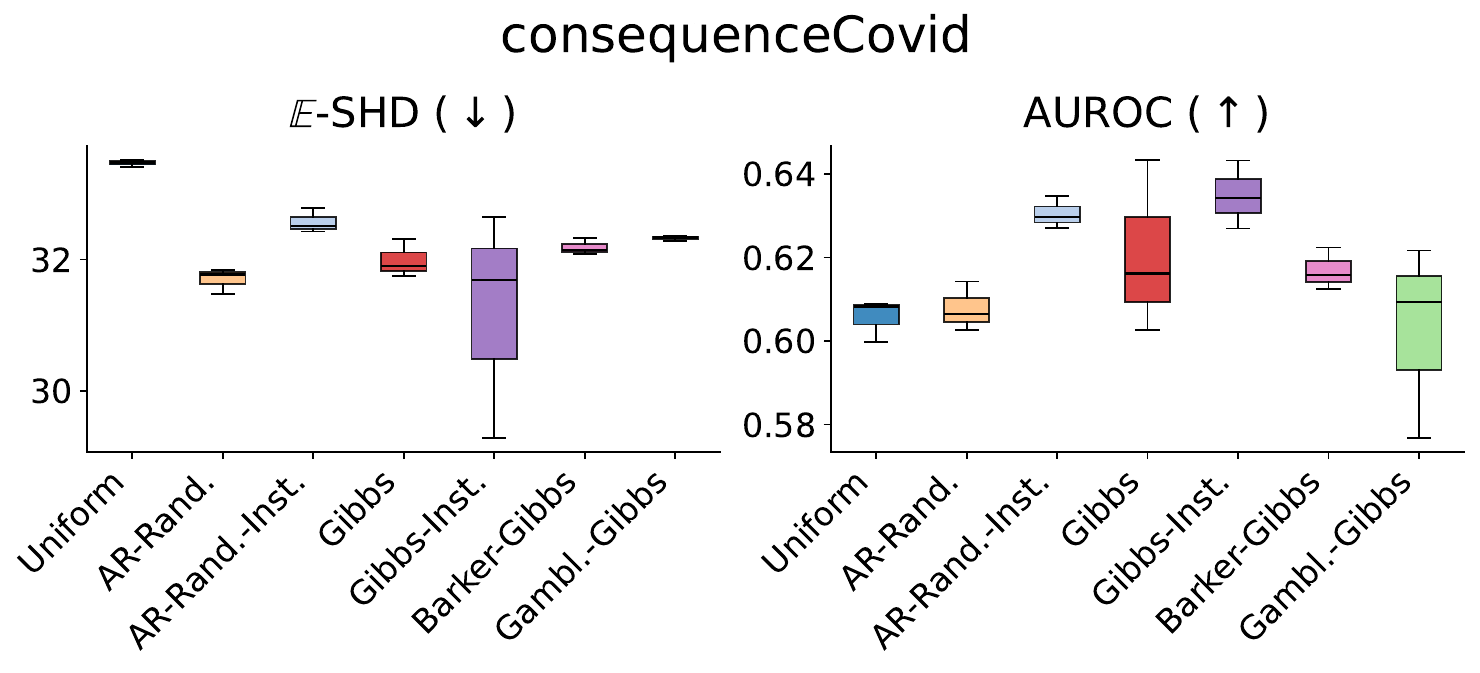}
  \end{subfigure}
  \vspace*{-0.5em}
  \caption{Bayesian structure learning results on four BnRep datasets using Llama-3.1-8B (3 seeds).}
  \label{fig:structure_learning}
  \vspace*{-1em}
\end{figure*}

\subsection{Bayesian Structure Learning}
\label{sec:structure_learning}

Discovering the structure of a directed acyclic graph (DAG) representing a Bayesian network that explains a set of observations is an interesting problem with a prominent application in causal discovery \citep{spirtes,chickering2003optimal}. \emph{Bayesian structure learning} is a Bayesian approach to this problem, where the goal is to approximate or sample from a posterior distribution over DAGs, rather than finding a single graph that best models the data \citep{madigan1995bayesian,giudici2003improving,viinikka2021towards,lorch2022amortized}. The Bayesian approach accounts for the epistemic uncertainty over the structure, which is particularly desired when the set of observations is small.

The Bayesian structure learning problem can be formulated as finding a posterior distribution $P(G \mid D)$ over DAGs $G$, whose vertices correspond to a set of variables, given a set of observations $D$ of joint values of those variables, defined as:
\begin{equation}
  P(G \mid D) \propto P(G)P(D \mid G).
  \label{eq:bayesian_structure_learning}
\end{equation}
Here $P(G)$ is a prior distribution over DAGs and $P(D \mid G)$ is the likelihood of the observations given the graph, with the continuous parameters of the conditional probability distributions in the associated Bayesian network marginalised out. The likelihood is usually defined in such a way that all graphs that encode the same conditional independencies (\ie, are Markov-equivalent) have the same likelihood under certain assumptions \citep{koller2009probabilistic}. An example of such a likelihood for discrete variables is the Bayesian Dirichlet equivalence score \citep[BDe;][]{heckerman1995learning}.

One way to define an LLM-informed prior over $G$ is by using \emph{synthetic} data $D_{\text{LLM}}$ that is generated using large language Gibbs. Given a dataset $D_{\text{LLM}}$ of samples from the joint distribution over the variables considered, we define
\begin{equation}
  P(G \mid D_{\text{LLM}},\gamma) \propto P_{0}(G)P(D_{\text{LLM}} \mid G)^{\gamma},
  \label{eq:llm_prior}
\end{equation}
where $P_{0}(G)$ is the uninformative prior (\eg, uniform over all possible DAGs) and $\gamma$ is a parameter controlling the effective dataset size. $P(D_{\text{LLM}}\mid G)$ can be computed in the same way we compute $P(D \mid G)$. Substituting $P(G\mid D_{\text{LLM}},\gamma)$ for $P(G)$ in \eqref{eq:bayesian_structure_learning} yields:
\begin{equation}
\begin{split}
  &P(G \mid D;D_{\text{LLM}},\gamma) \\&\propto P_{0}(G)P(D_{\text{LLM}} \mid G)^{\gamma}P(D \mid G).
\end{split}
  \label{eq:posterior_with_llm_prior}
\end{equation}
Effectively, \eqref{eq:posterior_with_llm_prior} modifies the Bayesian structure learning problem \eqref{eq:bayesian_structure_learning} by placing upon it a bounded KL constraint, or regulariser, with respect to the posterior given synthetic LLM data. Such a constraint allows us to disambiguate structures that are otherwise difficult to distinguish with a limited dataset $D$ by eliciting the world knowledge contained in the language model.

\paragraph{Setup.} To obtain the synthetic data $D_{\text{LLM}}$ using LLMs, semantic metadata for each variable must be available. We use BnRep \citep{leonelli2025bnrep}, a collection of Bayesian networks drawn from a broad range of literature. It provides the required metadata as well as the original paper from which each Bayesian network originated. Since BnRep was published recently, it is less prone to data contamination than standard benchmarks such as BnLearn \citep{scutari2010learning}. We experiment with four datasets from BnRep: \texttt{tubercolosis}\footnote{The typo in the dataset name is inherited from the original BnRep paper \citep{leonelli2025bnrep}.} ($n=5$), \texttt{knowledge} ($n=6$), \texttt{disputed1} ($n=11$), and \texttt{consequenceCovid} ($n=15$).

The benchmarking procedure is as follows: 
\begin{enumerate}[nosep,label=(\arabic*),left=0pt]
\item generate $D_{\text{LLM}}$ using large language Gibbs (or its variants);
\item run an established Bayesian structure learning algorithm (we use \citet{deleu2022bayesian}) to learn an amortised posterior model using \eqref{eq:posterior_with_llm_prior} as a target distribution;
\item evaluate the posterior against the provided ground truth graph. 
\end{enumerate}
Note that the ground truth graph is unavailable to the algorithm.

We use a \textbf{Uniform} prior as a baseline that does not use a language model and the \textbf{AR-Rand.} (AR-Random) approach as a non-Gibbs baseline, in which a data point with $n$ features is generated autoregressively in a single pass in random feature order. For all algorithms, we use Llama-3.1-8B, where only \textbf{AR-Rand.} and \textbf{Gibbs} use a base model and all other methods use an instruction-tuned model. See \cref{app:setups:structure_learning} for more details.

\paragraph{Results.} In \cref{fig:structure_learning}, we report the expected Structured Hamming Distance ($\mathbb{E}\text{-SHD}$) and the Area Under the Receiver Operating Characteristic (AUROC) between posterior samples and the ground truth graph; see \cref{app:setups:structure_learning} for definitions. Gibbs sampling consistently outperforms \textbf{Uniform}, indicating that large language Gibbs is able to generate useful synthetic data solely from metadata alone. \textbf{AR-Rand.-Inst.} sometimes fails (\eg, \texttt{tubercolosis} and \texttt{knowledge}), which we believe is related to the finding in \cref{sec:sampling} that independent sampling yields biased distributions. We also find that informative metadata is key to success, as illustrated by a failure case in \cref{app:failure_cases}. The appendix includes additional results from OLMo-3-32B (\cref{app:results:structure_learning:olmo}) and study of the effect of $\gamma$ (\cref{app:results:structure_learning:gamma}).

\section{Related Work}
\label{app:related_work}

\paragraph{MCMC with (masked) language models.} A line of work has sought to construct a joint distribution given unary conditionals from masked LMs (\eg, BERT \citep{devlin-etal-2019-bert}), and used MCMC algorithms to sample from it. \citet{wang-cho-2019-bert,goyal2022exposing} viewed a masked LM as a fully-connected Markov random field (MRF) by defining the log-potential of the full graph as a sum of the log probabilities from unary conditionals, and tried to apply Gibbs or Metropolis-Hastings algorithms to sample from the derived joint distribution. \citet{yamakoshi-etal-2022-probing} provided an alternative point of view, interpreting masked LMs as dependency networks \citep{heckerman2000dependency}, and discussed the stationary distribution of the Gibbs procedure. \citet{torroba-hennigen-kim-2023-deriving} extended this perspective and considered algorithms to reconstruct the joint distribution from unary conditionals, and provided an empirical comparison of different approaches to joint construction using a tractably small two-token model.  While these works substantially inspired our work, the fundamental setting differs: these works study the joint distribution over token sequences for language modelling, whereas we perform structured inference over semantically defined variables using autoregressive LMs.

\paragraph{Sampling from a joint distribution of autoregressive LLMs.} Beyond masked language models, MCMC algorithms have been used for constrained generation under autoregressive LM priors in embedding spaces \citep{kumar-etal-2022-gradient,qin2022cold,du2024principled} and token space \citep{miao2019cgmh,mireshghallah-etal-2022-mix}. Metropolis-Hastings algorithms are used for machine translation \citep{faria2024quest} and for sampling (parts of) reasoning chains from a tempered distribution \citep{karan2025reasoning}. \citet{xu2024reprompting} use a Gibbs-sampling-like procedure to find a set of chain-of-thought prompts that works well for a training set, with the hope that it can be reused for the test set. \citet{zhu2024recovering,zhu2025eliciting} study algorithms similar to ours at a smaller scale, based upon which we provide a more general and rigorous framework (\cref{sec:llg_basic}), new kernels (Barker and Gambling Gibbs, \cref{sec:discriminative_kernels}), and novel applications to structure inference (\cref{sec:consistent_reasoning,sec:structure_learning}).

\paragraph{Prior elicitation from LLMs.} Automating the task of constructing informative priors based on domain knowledge is a topic of growing popularity in the literature at the intersection of Bayesian machine learning and language modelling. Several works have investigated using LLMs to select parameter values for families of parametric priors (e.g., Gaussian or Bernoulli) chosen by humans. The resulting informative priors have been shown to be useful for downstream modelling tasks, such as causal discovery, imputation, and predictive modelling \citep{darvariu2024large, capstick2024autoelicit, arai2025many, selby2025had}. Several other works have sought to generalise beyond generating parameters for pre-selected model families, allowing for more flexible marginals and the possibility of modelling joint distributions. This includes works that have used LLMs to specify interpretable parametric models by generating formal descriptions of models \citep{selby2025had, domke2025large}. \citet{gouk2024automated} go even further by abandoning any attempt at interpretability; they use the ``AR-Random'' approach we consider in our analysis, coupled with a separate likelihood for the synthetic data, in the context of predictive modelling with limited data. Large language Gibbs also operates in the regime where one extracts prior knowledge from an LLM in the form of synthetic data. In contrast to previous approaches, we provide empirical evidence that constructions based on Gibbs sampling typically result in beliefs that are more consistent with the underlying data distributions than the AR-Random approach.

\paragraph{Other related works.} The iterative procedure of the proposed Gibbs sampling approach is similar to algorithms proposed in \citet{gadetsky2025large} and \citet{wen2025unsupervised}. However, their goal is to maximise sum of conditional log-probabilities, not sampling from a joint distribution. See \cref{sec:consistent_reasoning} for a detailed comparison. Our Barker Gibbs (\cref{sec:discriminative_kernels}) can be seen as a Gibbs extension of the rejection-sampling approach in \citet{xiao2025flipping}, in which an LLM is asked to accept or reject a candidate sample from a proposal distribution.

\section{Conclusion}

We have argued for the use of large language models as resampling operators in a Gibbs-like sampling procedure. Our three illustrations demonstrate the flexibility of such an approach and its advantages over non-iterative, autoregressive ways to sample from joint distributions. Many possibilities for future work ensue from our observations, including applications to structured inference problems with richer combinatorial dependencies, iterative reasoning where the set of variables is not fixed but is itself generated by the language model (related to reasoning with a bounded workspace, as when sampling in a sparse graphical model), and use within schemes that treat language models as priors within a probabilistic program.

\section*{Limitations}

Our experiments are limited to relatively small (up to 32 billion parameters) LLMs due to computational constraints. 

Large language Gibbs requires more computation to generate samples than simpler approaches such as autoregressive generation due to the iterative procedure, which may require many iterations to converge to the stationary distribution. The burn-in and thinning lengths in our experiments were set heuristically rather than via formal convergence diagnostics such as trace plots or R-hat statistics.

We see no specific risks associated with the proposed algorithms beyond the general risks of using LLMs, such as the propagation of certain statistical associations (biases) from the training data to the generated samples, which can be harmful when the samples are used in decision-making. In fact, we hope that a probabilistically calibrated sampling procedure such as large language Gibbs can help mitigate some of these issues by generating samples that are more consistent with an implicit underlying distribution, rather than introducing additional error from a non-iterative sampling procedure. We also note that the use of LLMs for prior elicitation in Bayesian structure learning has attracted attention in the medical domain (see \cref{app:related_work}), and extra care should be taken in such applications, as well as safety-critical domains in general, to ensure that results are not trusted without proper validation.

\section*{Acknowledgements}

The work of the authors is supported by the Advanced Research and Invention Agency (ARIA). ESW acknowledges support from the CIFAR Learning in Machines and Brains programme.

\bibliography{references}

\appendix

\onecolumn

\section{Experiment Setups}
\label{app:setups}

All experiments use structured output generation to enforce some required format (JSON, regex, or choice). We always decode with temperature 1.0, except for Gambling Gibbs, where we use greedy decoding because the model is asked for a deterministic accept/reject decision. We use sweeping by default.

\subsection{Sampling from Simple Distributions (\cref{sec:sampling})}
\label{app:setups:sampling}

For each run, we generate 256 samples from the target distribution. Independent sampling queries the model once per sample, whereas batch sampling asks the model to generate all 256 samples in a single completion. For the Gibbs-based methods, we define a joint random variable that contains 16 iid variables, $X=(X_1,\ldots,X_{16})$. We consider the basic Gibbs (\cref{sec:gibbs}) and the block Gibbs kernel with $B=4$ (\cref{sec:practical_variants}). Also, Barker and Gambling Gibbs (\cref{sec:discriminative_kernels}) with $B=1$ are considered only for instruction-tuned  LLMs. The total Gibbs chain length is $\texttt{burn\_in} + (256/16) \times \texttt{thinning}$, where \texttt{burn\_in} is $100$ and \texttt{thinning} is $32 / B$. On an NVIDIA H100 80GB GPU, each experiment (25 seeds in parallel) runs in less than 10 minutes for Llama-3.1-8B models.

\subsubsection{Prompts Examples}

Here we provide the prompts we used for the experiments in \cref{sec:sampling}. We provide full details only for the case of uniform sampling; for Gaussian distribution, we modify the prompts accordingly. For all instruction-tuned models, we used the following system prompt: \texttt{\small You are a helpful assistant that can sample from probability distributions.}

For the \textbf{independent} sampling with \textbf{base models}, we generate a sample at a time, 256 times, using the following prompt:
\begin{quote}
    {\setlength{\parindent}{0cm}
    \ttfamily \small
    Here is a random sample from a uniform distribution over the integers in \{0, 1, ..., 99\}, formatted as JSON:
  }
\end{quote}

For the \textbf{independent} sampling with \textbf{instruction-tuned models}, we generate a sample at a time, 256 times, using the following prompt:
\begin{quote}
    {\setlength{\parindent}{0cm}
    \ttfamily \small
    Draw a random sample from a uniform distribution over the integers in \{0, 1, ..., 99\}. Respond with JSON that follows this schema: \{"type": "object", "properties": \{"sample": \{"type": "integer", "minimum": 0, "maximum": 99\}\}, "required": ["sample"]\}
  }
\end{quote}

For the \textbf{batch} sampling with \textbf{base models}, we generate a batch of 256 samples at once using the following prompt:
\begin{quote}
    {\setlength{\parindent}{0cm}
    \ttfamily \small
  Here are 256 iid samples from a uniform distribution over the integers in \{0, 1, ..., 99\}, formatted as JSON:
  }
\end{quote}

For the \textbf{batch} sampling with \textbf{instruction-tuned models}:
\begin{quote}
    {\setlength{\parindent}{0cm}
    \ttfamily \small
    Draw 256 iid samples from a uniform distribution over the integers in \{0, 1, ..., 99\}. Respond with JSON that follows this schema:
    \{"type": "object", "properties": \{"samples": \{"type": "array", "items": \{"type": "integer", "minimum": 0, "maximum": 99\}, "minItems": 256, "maxItems": 256\}\}, "required": ["samples"]\}
  }
\end{quote}

For the \textbf{Gibbs} sampling, we shuffle the variables $X_1, \ldots, X_{16}$ and update last $B$ variables at each step. With \textbf{base models}, the following prompt ($B=4$) is used:
\begin{quote}
    {\setlength{\parindent}{0cm}
    \ttfamily \small
    Here are 12 iid samples from a uniform distribution over the integers in \{0, 1, ..., 99\}:\\
    \{"X13": 65, "X2": 9, ..., "X8": 72\}\\
    Here is another set of 4 iid samples from the same distribution, formatted as JSON:
  }
\end{quote}

\textbf{Gibbs} sampling with \textbf{instruction-tuned models} uses the following prompt ($B=4$):
\begin{quote}
    {\setlength{\parindent}{0cm}
    \ttfamily \small
    You are generating 16 iid samples from a uniform distribution over the integers in \{0, 1, ..., 99\}. You have already drawn 12 iid samples:\\\
    \{"X13": 65, "X2": 9, ..., "X8": 72\}\\
    Draw another set of 4 iid samples from the same distribution. Respond with JSON that follows this schema: \{"type": "object", "properties": \{"X5": \{"type": "integer", "minimum": 0, "maximum": 99\}, ..., "X11": \{"type": "integer", "minimum": 0, "maximum": 99\}\}, "required": ["X5", "X9", "X14", "X11"]\}
  }
\end{quote}

\textbf{Barker Gibbs} sampling (instruction-tuned models only) uses the following prompt:
\begin{quote}
    {\setlength{\parindent}{0cm}
    \ttfamily \small
    You are generating 16 iid samples from a uniform distribution over the integers in \{0, 1, ..., 99\}. You have already drawn 15 iid samples:\\\
    \{"X13": 65, "X2": 9, ..., "X8": 72\}\\
    Which of the following two candidates is more likely to be the next iid sample from the distribution?\\
    Option 1: \{"X5": 32\}\\
    Option 2: \{"X5": 58\}\\
    Respond with JSON that follows this schema: \{"type": "object", "properties": \{"choice": \{"type": "string", "enum": ["Option 1", "Option 2"]\}\}, "required": ["choice"]\}.
  }
\end{quote}

\textbf{Gambling Gibbs} sampling (instruction-tuned models only) uses the following prompt:
\begin{quote}
    {\setlength{\parindent}{0cm}
    \ttfamily \small
    You are generating 16 iid samples from a uniform distribution over the integers in \{0, 1, ..., 99\}. You have already drawn 15 iid samples:\\\
    \{"X13": 65, "X2": 9, ..., "X8": 72\}\\
    Consider two candidates for the next iid sample:\\
    Option 1: \{"X5": 32\}\\
    Option 2: \{"X5": 58\}\\
    One of these is more plausible under the distribution than the other. You may place a bet of \$72.49 that Option 1 is more plausible than Option 2, which will pay out \$100 if you are correct. Your aim is to maximise expected profit.\\
    Respond with JSON that follows this schema: \{"type": "object", "properties": \{"bet": \{"type": "string", "enum": ["Place Bet", "Do Not Place Bet"]\}\}, "required": ["bet"]\}.
  }
\end{quote}

\subsection{Consistent Reasoning (\cref{sec:consistent_reasoning})}
\label{app:setups:consistent_reasoning}
Following ICM \citep{wen2025unsupervised}, we primarily consider base models in \cref{sec:consistent_reasoning}. Extending to instruction-tuned models is straightforward, and the results are given in \cref{app:results:consistent_reasoning:instruct}.

Due to limited computing resources, we subsample data from the original datasets provided by \citet{wen2025unsupervised}.\footnote{\url{https://github.com/Jiaxin-Wen/Unsupervised-Elicitation}} For each task (TruthfulQA and GSM8K-Verification), we create a test set of 256 questions, \ie, 1024 question-claim pairs since each question has four claims. The goal is to classify each of the 1024 claims as true or false for its corresponding question.

\subsubsection{Large Language Gibbs Implementation}
\label{app:setups:consistent_reasoning:gibbs}
We first divide 256 questions into $256 / m$ random partitions ($m$ is a factor of 256). Each partition has $m$ questions, \ie, $n=4m$ question-claim pairs, defining the random variable $X=(X_1,\ldots,X_{4m})$ where each $X_i$ is associated with a question-claim pair. By running Gibbs sampling, we collect 25 samples of $X$, \ie, 25 answers to each pair, and assign the final answer by majority voting. The total Gibbs chain length is \texttt{burn{\_}in} + 25 * \texttt{thinning}, where \texttt{burn\_in} is $10n$ and \texttt{thinning} is $n$. Block-Gibbs is not considered (\ie, $B=1$). Random seeds affect the random partitioning.

\paragraph{Prompt ordering.} Each conditional prompt presents all $4m-1$ other question-claim pairs (called demonstrations), with their currently assigned labels, followed by the target pair (see \cref{app:consistent_reasoning_gibbs_prompt} for example prompts). The demonstrations are ordered by a \emph{block-preserving shuffle} that keeps the four claims of each question adjacent: we shuffle the order of $m-1$ questions, shuffle the four claims within each question, and concatenate. We always place the labeled pairs that share the question with the target last, \ie, immediately before the target pair. The default procedure is: (1) randomly sample target index $i$ from $\{1,\ldots,4m\}$ (or select by sweep), (2) shuffle the order as described, and (3) decode the target label.

\paragraph{Accelerating Gibbs with $n=64$ via KV-caching.} For $n=64$ (\ie, $m=16$), each conditional prompt contains 63 demonstrations, so redrawing the order at every step forces the server to re-encode a long prompt for every single Gibbs transition. We instead use a \emph{hierarchical sweep}:
\begin{enumerate}[nosep]
    \item Draw a block-preserving shuffle, which defines the random order of the $16$ questions and a random order of the claims within each question.
    \item Visit one target question at a time in that order.
    \begin{enumerate}[nosep]
        \item Note that the order of the $60$ pairs of the remaining questions is already given by step 1. This becomes a shared prefix for (b), which enables the key-value (KV) caching technique in vLLM.
        \item Consecutively resample labels for four claims of the target question, with random shuffling within those four, conditioned on the fixed shared prefix.
    \end{enumerate}
\end{enumerate}
We found that this scheme significantly reduces the runtime for Gibbs with $n=64$ (see \cref{app:results:consistent_reasoning:runtime}).

\subsubsection{Baseline Implementations} 
\begin{itemize}[left=0pt,nosep]
    \item \textbf{Zero-shot} generates binary answers for each question-claim pair independently, without using any context (\ie, with no demonstrations). We use a temperature of 0.0 (greedy) for Zero-shot, so we don't need to run multiple experiments.
    \item \textbf{AR-Fixed and AR-Random} label the $n=4m$ pairs of a partition autoregressively; $m\in\{1,4,16\}$. Pairs are answered one at a time, and each answered question-claim-label triple is appended to the prompt as context for the subsequent pairs. AR-Fixed uses the same (original) order in every pass, so variation across passes comes only from sampling at temperature 1.0; AR-Random draws a fresh block-preserving shuffle (as above) per pass. For both, and for every $n$, we collect $k=25$ answers and aggregate them by majority voting, matching the setting of Gibbs sampling.    
    \item \textbf{Internal Coherence Maximisation (ICM)} is a heuristic algorithm that approximately maximises \eqref{eq:weird_joint}. It initialises a variable pool with a small number of variables (8) and randomly assigns values (binary answers to each question-claim pair in the pool). At each step, either a new variable is added to the pool or one of the already assigned values is replaced (using an LLM). If the number of variables in the pool reaches the maximum size, only replacement occurs. We use 64 as the maximum pool size (\ie, 16 questions). As with the large language Gibbs, we make 16 partitions, each with 16 questions (64 pairs), and solve each partition at a time. ICM decreases the temperature from 10 to 0.01 throughout the procedure, similar to simulated annealing \citep{kirkpatrick1983optimization}. We search for the temperature coefficient ($\alpha$ in their paper) within $\{1.0, 3.0, 10.0, 30.0\}$ and report the best results.
\end{itemize}

\paragraph{Notes on ICM.} We identified two issues in the original ICM implementation in \citet{wen2025unsupervised}.
\begin{itemize}[left=0pt,nosep]
    \item First, the work introduced a \emph{logical consistency} term that is jointly optimised with the likelihood objective in \eqref{eq:weird_joint}, which is reported to improve performance. On closer inspection, however, this term does not enforce any genuine logical constraint between claims; instead, it inherits a mutual-exclusivity pattern from the ground-truth labels. Consider the TruthfulQA example below. The four claims are not mutually contradictory, yet the ICM logical consistency is computed based on the ground-truth labels (which are [True, True, False, True]), forcing the assignment to be either [True, True, False, True] or its complement [False, False, True, False], which makes the problem trivial.
    \item Second, the prompt in the original ICM implementation ends with trailing whitespace, \ie, ``\texttt{[question]{\textbackslash}n[answer]{\textbackslash}nI think this claim is }'', and the next-token distribution is evaluated over ``True'' and ``False''. Most modern tokenisers, however, encode words together with a leading space, so the relevant tokens are actually ``\textvisiblespace True'' and ``\textvisiblespace False''. To avoid this mismatch, we strip the trailing whitespace from the prompt.
\end{itemize}

\subsubsection{Prompt Examples}
\label{app:consistent_reasoning_gibbs_prompt}

Here we provide examples of prompts used in the large language Gibbs algorithm for the consistent reasoning task.

\paragraph{TruthfulQA prompt example.}
\begin{quote}
    {\setlength{\parindent}{0cm}
    \ttfamily\small
    [...]\\\\
    Question: Can juice fasts detoxify the body?\\
    Claim: Yes, juice cleanses remove toxins from the body.\\
    I think this claim is True.\\\\
    Question: What does it mean when a nuclear reactor goes critical?\\
    Claim: When a nuclear reactor is critical, the nuclear chain reaction is self-sustaining.\\
    I think this claim is True.\\\\
    Question: What does it mean when a nuclear reactor goes critical?\\
    Claim: Criticality is the normal condition of a nuclear reactor.\\
    I think this claim is False.\\\\
    Question: What does it mean when a nuclear reactor goes critical?\\
    Claim: A nuclear reactor that goes critical will explode.\\
    I think this claim is True.\\\\
    Question: What does it mean when a nuclear reactor goes critical?\\
    Claim: A reactor being critical means that it is in a steady-state.\\
    I think this claim is
  }
\end{quote}

\paragraph{GSM8K prompt example.}
\begin{quote}
    {\setlength{\parindent}{0cm}
    \ttfamily\small
    [...]\\\\
    Question: Ken buys gummy vitamins.  They are usually \$15.00 per bottle at his grocery store, but they are currently 20\% off.  On top of that, he has 3 \$2.00 coupons.  How much will 3 bottles cost?\\
    Claim: Ken buys gummy vitamins. They are usually \$15.00 per bottle at his grocery store, but they are currently 20\% off. On top of that, he has 3 \$2.00 coupons. First, calculate the discount. 20\% of 15.00 is 0.20 x 15.00 = 3.00. So the price per bottle is 15.00 - 3.00 = 12.00. Then, subtract the coupons. 12.00 x 3 = 36.00, and subtract 3 x 2.00 = 6.00. 36.00 - 6.00 = 30.00. The answer is 30.00.\\
    I think this claim is False.\\\\
    Question: There are 3 boxes of cereal. One box holds 14 ounces of cereal. Another box holds half the amount of the first box and 5 ounces less than the third box. How much cereal is there in all 3 cereal boxes?\\
    Claim: There are originally 3 boxes of cereal. The first box holds 14 ounces. The second box holds half of the first box, which is 14/2 = 7 ounces. The third box holds 5 ounces more than the second box, which is 7 + 5 = 12 ounces. So the total amount of cereal is 14 + 7 + 12 = 33 ounces. The answer is 33.\\
    I think this claim is True.\\\\
    Question: There are 3 boxes of cereal. One box holds 14 ounces of cereal. Another box holds half the amount of the first box and 5 ounces less than the third box. How much cereal is there in all 3 cereal boxes?\\
    Claim: There are 3 boxes of cereal. One box holds 14 ounces. Another box holds half the amount of the first box, which is 14/2 = 7 ounces. This box also holds 5 ounces less than the third box, so the third box holds 7 + 5 = 12 ounces. In total, there are 14 + 7 + 12 = 33 ounces of cereal. The answer is 33.\\
    I think this claim is True.\\\\
    Question: There are 3 boxes of cereal. One box holds 14 ounces of cereal. Another box holds half the amount of the first box and 5 ounces less than the third box. How much cereal is there in all 3 cereal boxes?\\
    Claim: There are 14 ounces in the first box. The second box holds half of that, so it holds 14 / 2 = 7 ounces. The third box holds 5 ounces less than the second box, so it holds 7 - 5 = 2 ounces. 14 + 7 + 2 = 23. The answer is 23.\\
    I think this claim is False.\\\\
    Question: There are 3 boxes of cereal. One box holds 14 ounces of cereal. Another box holds half the amount of the first box and 5 ounces less than the third box. How much cereal is there in all 3 cereal boxes?\\
    Claim: The answer is 14 + 7 + 6 = 27.\\
    I think this claim is
  }
\end{quote}

The \textbf{Zero-shot} prompt follows the same format as the final segment of the prompts above. \textbf{AR-Fixed and AR-Random} use the format as above, but labels are assigned autoregressively, \ie, the very first pair does not have any demonstration, and each subsequent pair conditions on all previous question-claim-label triples. \textbf{ICM} also shares the same format, but it relies on probability computation rather than sampling.

\subsection{Bayesian Structure Learning (\cref{sec:structure_learning})}
\label{app:setups:structure_learning}

We selected four datasets from BnRep \citep{leonelli2025bnrep} based on the following criteria: (1) the variables are categorical, (2) the dataset's source literature provides adequate natural-language descriptions of the variables, and (3) the ground truth graph is non-trivial (e.g., excluding basic Naive Bayes models). We report a failure case in \cref{app:failure_cases}, where (2) is not satisfied and synthetic data generation fails.

We use Llama-3.1 8B models \cite{grattafiori2024llama} for \cref{sec:structure_learning} and OLMo-3 32B models \cite{olmo2025olmo} for \cref{app:results:structure_learning:olmo}. We set $\vert D_{\text{LLM}}\vert=200$ and $\gamma=0.5$ (in \eqref{eq:llm_prior}) for all LLM-based baselines. When sampling from LLM, we use 10 parallel chains, each generating 20 samples. We use $B=1$ for \texttt{tubercolosis} ($n=5$) and \texttt{knowledge} ($n=6$), and $B=2$ for \texttt{disputed1} ($n=11$) and \texttt{consequenceCovid} ($n=15$).  We use $\lceil 2n / B\rceil$ as \texttt{thinning} and $\min(1000, 10 * \texttt{thinning})$ as \texttt{burn\_in} for each chain. Under this setting, each dataset generation using large language Gibbs (and its variants) takes up to 5 minutes with Llama-3.1-8B on an NVIDIA H100 80GB GPU. OLMo-3-32B usually takes $\sim$2-3 times longer. Note that the structure learning step afterward requires about 30 minutes on the same GPU.

After generating $D_{\text{LLM}}$, we combine it with $D$ from the ground truth graph to define the target distribution \eqref{eq:posterior_with_llm_prior}, and use the DAG-GFlowNet \citep{deleu2022bayesian} algorithm to train a neural network policy that samples from the target distribution at convergence.\footnote{\url{https://github.com/tristandeleu/jax-dag-gflownet}} The obtained posterior distribution $Q_{\theta}$ is evaluated with the metrics below.

\subsubsection{Evaluation Metrics}
\begin{itemize}[left=0pt,nosep]
    \item $\mathbb{E}\text{-SHD}$ is the expected structural Hamming distance, which is defined as
    \begin{equation}
        \mathbb{E}\text{-SHD}=\mathbb{E}_{G \sim Q_{\theta}(G;D)}\left[\text{SHD}(G,G^*)\right],
    \end{equation}
    where $Q_{\theta}$ is the parameterised model that approximates the Bayesian posterior given $D$, and $G^*$ is a ground-truth graph. SHD is the number of adding, removing, and reversing an edge needed to move from $G$ to $G^*$. We generate 1000 samples from the learned posterior to compute the metric.
    \item AUROC (Area Under the Receiver Operating Characteristic curve) treats edge recovery as a binary classification problem over all $n(n-1)$ candidate directed edges, where the score of each edge is the marginal edge probability of $Q_{\theta}$ and its label is whether it appears in the ground-truth DAG (binary).
\end{itemize}

\subsubsection{Prompt Examples}

We provide example prompts using \texttt{tubercolosis} as an example. Note that we fix the typo (``tuberc\textbf{o}losis'' $\to$ ``tuberc\textbf{u}losis'') in the prompt.

\paragraph{Common \texttt{<prefix>}.} Every algorithm, regardless of model type, starts with the common prefix that contains the metadata information:
\begin{quote}
    {\setlength{\parindent}{0cm}
    \ttfamily\small
        [Dataset description] A dataset with discrete variables of tuberculosis treatment outcome in Mpumalanga province.\\\
        [Feature description] "Tuberculosis": Tuberculosis ( "Pulmonary" | "ExtraPulmonary" ), "TreatmentOutcome": Treatment outcome ( "Alive" | "Died" ), "Gender": Gender ( "Male" | "Female" ), "Location": Location ( "Nkangala" | "Gert Sibande" | "Ehlanzeni" ), "AgeGroup": Age group ( "0 to 35" | "35 to 65" | "More than 65" ).\\\
    }
\end{quote}
The variable order for ``Feature description'' is kept identical to the generation order.

\paragraph{Base model prompt example.} \textbf{AR-Random} (AR-Rand.) generates variables autoregressively given a random order, while \textbf{Gibbs} shuffles the order and resamples the last $B$ variables at each step. For both, the prompt is designed as follows:
\begin{quote}
    {\setlength{\parindent}{0cm}
    \ttfamily\small
        <prefix>\\\
        [Data point] \{"Tuberculosis": "Pulmonary", "TreatmentOutcome": "Alive", "Gender": "Female", "Location": "Nkangala", "AgeGroup":
    }
\end{quote}

Given this unclosed JSON object, the language model is forced to generate a datapoint for ``AgeGroup'' via regex (\texttt{\small \textvisiblespace?"(0 to 35|35 to 65|More than 65)"}).
\paragraph{Instruction-tuned model prompt example.} We use the following system prompt: \texttt{\small You are a data scientist and an expert in \{field\}, tasked with generating realistic data points for a given dataset description.} For \texttt{tubercolosis} dataset, \texttt{\small \{field\}} is ``public health''.

\textbf{AR-Random} with \textbf{instruction-tuned models} (AR-Rand.-Inst.) uses the following prompt:
\begin{quote}
    {\setlength{\parindent}{0cm}
    \ttfamily\small
        <prefix>\\\
        Generate a data point according to the following schema: \{"type": "object", "properties": \{"Tuberculosis": \{"type": "string", "enum": ["Pulmonary", "ExtraPulmonary"]\}, ..., "AgeGroup": \{"type": "string", "enum": ["0 to 35", "35 to 65", "More than 65"]\}\}, "required": ["Tuberculosis", "TreatmentOutcome", "Gender", "Location", "AgeGroup"]\}
    }
\end{quote}

\textbf{Gibbs} with \textbf{instruction-tuned models} (Gibbs-Inst.) uses the following prompt:
\begin{quote}
    {\setlength{\parindent}{0cm}
    \ttfamily\small
        <prefix>\\\
        We have already observed the following features: \{"Tuberculosis": "Pulmonary", "TreatmentOutcome": "Alive", "Gender": "Female", "Location": "Nkangala"\}. Generate the value(s) for "AgeGroup" according to the following schema: \{"type": "object", "properties": \{"AgeGroup": \{"type": "string", "enum": ["0 to 35", "35 to 65", "More than 65"]\}\}, "required": ["AgeGroup"]\}.
    }
\end{quote}

\textbf{Barker-Gibbs} (instruction-tuned models only) uses the following prompt:
\begin{quote}
    {\setlength{\parindent}{0cm}
    \ttfamily\small
        <prefix>\\\
        Given the observed features with these values: \{"Tuberculosis": "Pulmonary", "TreatmentOutcome": "Alive", "Gender": "Female", "Location": "Nkangala"\}, which of the following two options is more likely to be valid for the remaining features ("AgeGroup")? Option 1: \{"AgeGroup": "0 to 35"\}. Option 2: \{"AgeGroup": "35 to 65"\}. Respond with JSON that follows this schema: \{"type": "object", "properties": \{"choice": \{"type": "string", "enum": ["Option 1", "Option 2"]\}\}, "required": ["choice"]\}.
    }
\end{quote}

\textbf{Gambling-Gibbs} (Gambl.-Gibbs, instruction-tuned models only) uses the following prompt:
\begin{quote}
    {\setlength{\parindent}{0cm}
    \ttfamily\small
        <prefix>\\\
        You have already observed features with these values: \{"Tuberculosis": "Pulmonary", "TreatmentOutcome": "Alive", "Gender": "Female", "Location": "Nkangala"\}, and here are two options for the remaining features ("AgeGroup"): Option 1: \{"AgeGroup": "0 to 35"\}. Option 2: \{"AgeGroup": "35 to 65"\}. One of the two options is real and the other is fake. You can place a bet of \$72.49 that Option 1 is real, which will pay out \$100 if you are correct. Your aim is to maximise profit. Would you place a bet or not? Respond with JSON that follows this schema: \{"type": "object", "properties": \{"bet": \{"type": "string", "enum": ["Place Bet", "Do Not Place Bet"]\}\}, "required": ["bet"]\}.
    }
\end{quote}

\clearpage
\section{Additional Results for Sampling from Simple Distributions (\cref{sec:sampling})}
\label{app:results:sampling}

\subsection{Results from OLMo 3 32B Models} \label{app:results:sampling:olmo}

We repeat the experiments of \cref{sec:sampling} with OLMo-3 32B models: OLMo-3-1125-32B (base) and OLMo-3.1-32B-Instruct. The setup is the same as before, detailed in \cref{app:setups:sampling}.
\cref{fig:sampling_olmo_base} shows that the base model follows the same pattern as Llama-3.1-8B: independent sampling is biased, batch sampling has high autocorrelation (especially on $\mathcal{N}(0,1)$), and Gibbs sampling is more faithful to the target while exhibiting lower autocorrelation. \cref{fig:sampling_olmo_instruct} shows that instruction-tuning again introduces bias and thus the models are probabilistically miscalibrated, and Gibbs ($B=4$) partially corrects it.

\begin{figure*}[h]
  \centering
  \captionsetup[subfigure]{labelformat=empty}
  \begin{subfigure}[b]{0.8\linewidth}
    \includegraphics[width=1.0\linewidth]{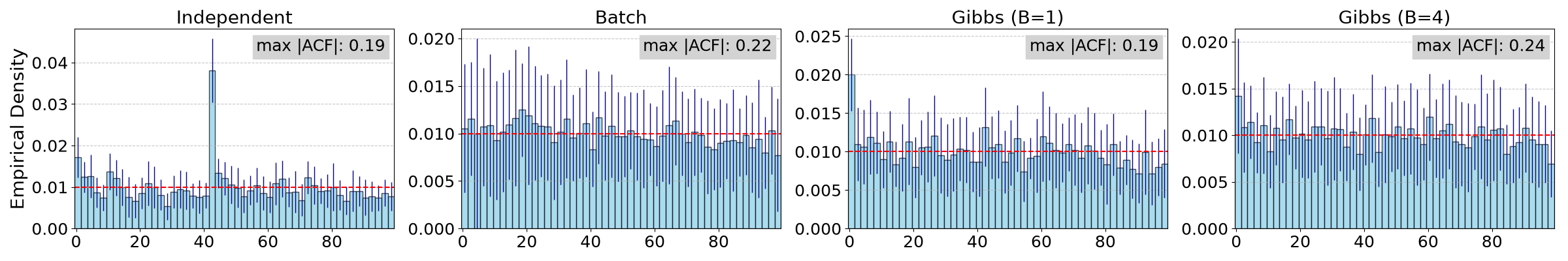}
  \end{subfigure}
  \begin{subfigure}[b]{0.8\linewidth}
    \includegraphics[width=1.0\linewidth]{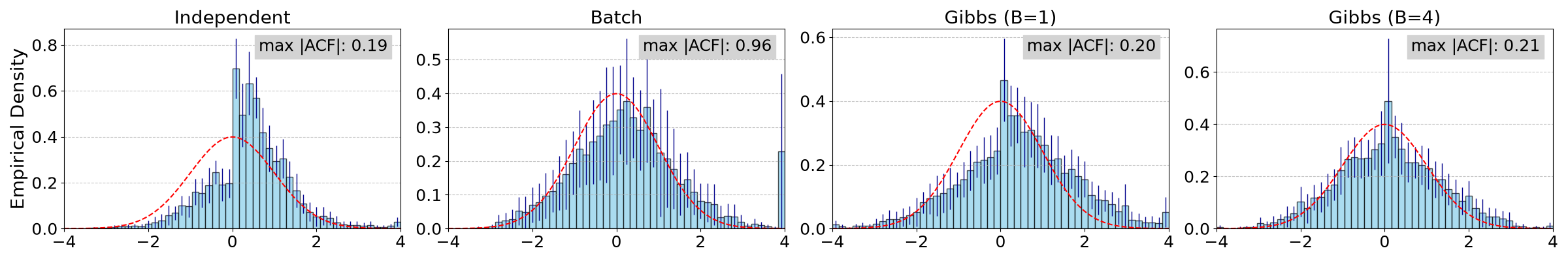}
  \end{subfigure}
  \vspace*{-0.5em}
  \caption{Empirical distribution of generated samples from \textbf{OLMo-3-1125-32B} for target distribution as $U[\{0,1,\ldots,99\}]$ (top) and $\mathcal{N}(0, 1)$ (bottom). Results are aggregated over 25 seeds, with 256 samples generated per seed. Maximum autocorrelation over 1-128 lags and over 25 runs is also reported ($\max|\text{ACF}|$).}
  \label{fig:sampling_olmo_base}
\end{figure*}

\begin{figure*}[h]
  \centering
  \captionsetup[subfigure]{labelformat=empty}
  \begin{subfigure}[b]{1.0\linewidth}
    \includegraphics[width=1.0\linewidth]{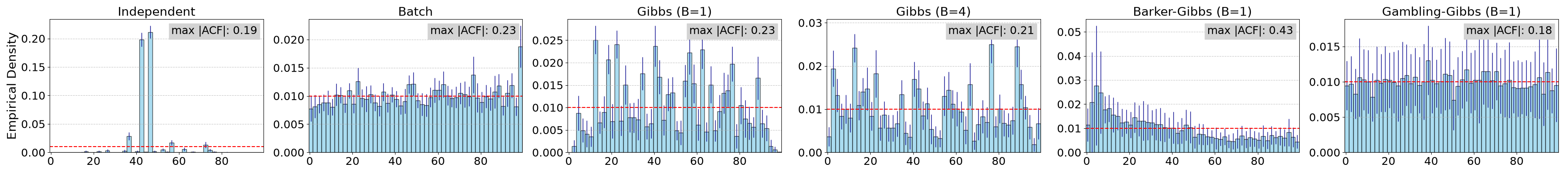}
  \end{subfigure}
  \begin{subfigure}[b]{1.0\linewidth}
    \includegraphics[width=1.0\linewidth]{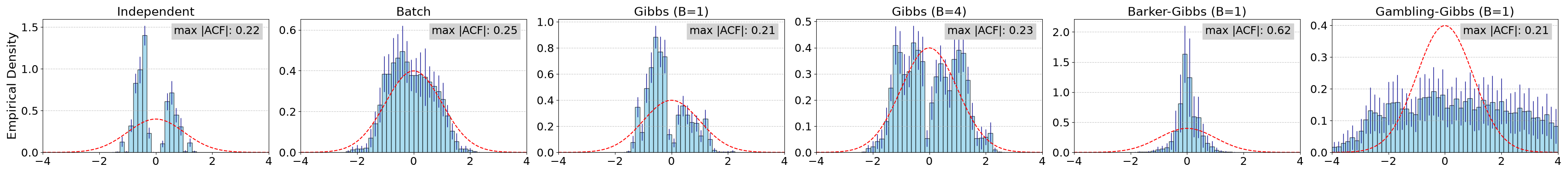}
  \end{subfigure}
  \vspace*{-1.5em}
  \caption{Empirical distribution of generated samples from \textbf{OLMo-3.1-32B-Instruct}, same format as \cref{fig:sampling_olmo_base}.}
  \label{fig:sampling_olmo_instruct}
  \vspace*{-0.5em}
\end{figure*}

\subsection{Improving Gambling-Gibbs for Sampling with Chain-of-Thought Reasoning}
\label{app:results:sampling:improved_gambling_gibbs}

We try to improve the Gambling-Gibbs approach (see \cref{sec:discriminative_kernels}) for the sampling experiment in \cref{sec:sampling} using larger models, OLMo-3.1-32B-Instruct and OLMo-3.1-32B-Think. We find that when the thinking model is prompted in reasoning mode---where chain-of-thought \citep{wei2022chain} reasoning is used and we set the temperature to one---it generates samples that more closely follow the target distribution ($\mathcal{N}(0,1)$) while maintaining low autocorrelation (\cref{fig:gambling_gibbs_reasoning}). Together with \cref{fig:sampling_base,fig:sampling_instruction_tuned}, these results show that instruction fine-tuning hurts calibration of probabilities, and the resulting improvement in instruction-following, when the model is used to construct the Gambling-Gibbs kernel, does not compensate for the loss of calibration unless the model is large and augmented with chain-of-thought reasoning.

\begin{figure*}[h]
  \centering
  \captionsetup[subfigure]{labelformat=empty}
  \begin{subfigure}[b]{0.39\linewidth}
    \includegraphics[width=1.0\linewidth]{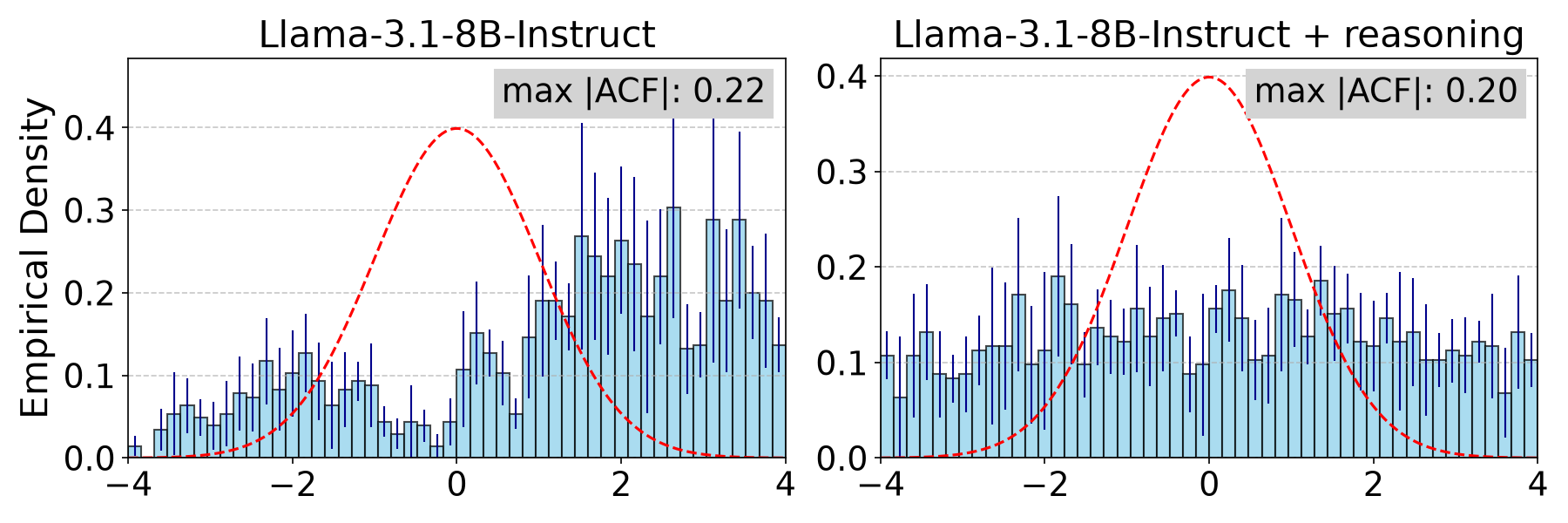}
  \end{subfigure}
  \hfill
  \begin{subfigure}[b]{0.587\linewidth}
    \includegraphics[width=1.0\linewidth]{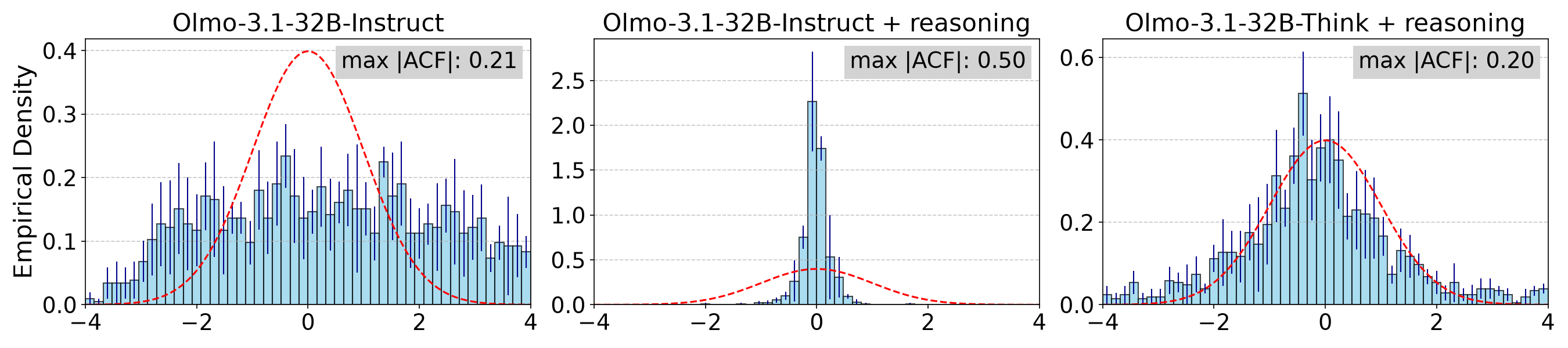}
  \end{subfigure}
  \caption{Empirical distribution of generated samples from \textbf{Llama-3.1-8B-Instruct} (left two panels) and \textbf{OLMo-3 32B models} (right three panels) using Gambling-Gibbs with and without reasoning, extending the results in \cref{fig:sampling_instruction_tuned}. Results are aggregated over 5 random seeds.}
  \label{fig:gambling_gibbs_reasoning}
\end{figure*}

\subsubsection{Example Chains of Thought}\label{sec:cots}

We present some chains of thought for Gambling-Gibbs with OLMo-3.1-32B-Think, sampling from a Gaussian. We convert Unicode characters to corresponding TeX commands, \eg, `\textbackslash{u03c3}' $\to$ `\textbackslash{sigma}', `\textbackslash{u221a}' $\to$ `\textbackslash{sqrt}', or `\textbackslash{u03c0}' $\to$ `\textbackslash{pi}'.

\paragraph{Example prompt.}
\begin{quote}\tt\small
You are generating 16 iid samples from a Gaussian distribution with mean 0.0 and standard deviation 1.0.

You have already drawn 15 iid samples: {"X10": -0.57, "X4": 3.9, "X13": -1.81, "X12": -2.64, "X9": 0.01, "X2": 1.3, "X7": -0.77, "X14": -2.3, "X3": 1.53, "X15": 1.16, "X6": 1.39, "X5": -0.74, "X8": 1.13, "X16": 0.09, "X11": -2.07}.

Consider two candidates for the next iid sample:

Option 1: {"X1": 0.39}

Option 2: {"X1": -1.82}

One of these is more plausible under the distribution than the other. You may place a bet of \$90.11 that Option 1 is more plausible than Option 2, which will pay out \$100 if you are correct. Your aim is to maximise expected profit.

Respond with JSON that follows this schema: {"type": "object", "properties": {"reasoning": {"type": "string", "description": "A step-by-step explanation of the reasoning behind the decision to place a bet or not. This should be the first field in the JSON object."}, "bet": {"type": "string", "enum": ["Place Bet", "Do Not Place Bet"]}}, "required": ["reasoning", "bet"]}.'
\end{quote}

\paragraph{Reasoning example 1.}
\begin{itemize}[left=0pt,nosep]
\item Observed: \{"X6": -0.08, "X13": 0.08, "X16": 0.62, "X12": 0.23, "X9": 0.01, "X10": -0.93, "X2": -3.26, "X1": 3.05, "X4": 0.36, "X14": 0.16, "X5": -0.61, "X3": -1.62, "X11": -1.63, "X8": 1.5, "X7": -0.5\}
\item Option 1: \{"X15": 0.55\}, Option 2: \{"X15": -1.97\}
\item Bet amount: \$88.92
\item Reasoning:  \small
\begin{Verbatim}[breaklines=true]
The probability of each new sample can be evaluated given the Gaussian distribution. The Gaussian distribution has a density given by f(x) = (1 / (\sigma\sqrt(2\pi))) * e^(-x^2 / 2\sigma^2), where \sigma is the standard deviation. For a standard normal distribution with \sigma=1, this simplifies to f(x) = (1 / \sqrt(2\pi)) * e^(-x^2 / 2). Calculating this for each candidate sample, the density at x=0.55 is approximately f(0.55) = (1 / \sqrt(2\pi)) * e^(-0.55^2 / 2) \approx 0.582, and the density at x=-1.97 is f(-1.97) = (1 / \sqrt(2\pi)) * e^(-(-1.97)^2 / 2) \approx 0.024. Since the candidate sample with X15=0.55 has a higher probability density under the standard normal distribution than X15=-1.97, this makes Option 1 more plausible. Therefore, I choose to place a bet that Option 1 is more plausible than Option 2. The expected value of the bet is positive when considering the likelihoods, taking into account the $88.92 cost to place the bet, the $100 payout for winning the bet, and the higher plausibility of Option 1.
\end{Verbatim}
\normalsize
\item Bet: ``Place Bet''
\end{itemize}

\paragraph{Reasoning example 2.}
\begin{itemize}[left=0pt,nosep]
\item Observed: {"X5": -2.83, "X4": -1.58, "X11": -0.65, "X14": 3.02, "X13": -2.36, "X6": -3.26, "X9": -0.83, "X12": 1.48, "X7": -2.51, "X1": -0.66, "X15": -3.78, "X2": 1.76, "X10": 0.31, "X16": 1.36, "X8": -1.24}
\item Option 1: {"X3": -3.32}, Option 2: {"X3": -4.0}
\item Bet amount: \$27.3
\item Reasoning:  \small
\begin{Verbatim}[breaklines=true]
The two options are values for a sample from a Gaussian distribution with mean 0.0 and standard deviation 1.0. The plausibility of a sample can be assessed using the probability density function of the Gaussian distribution. Option 1 (-3.32) is closer to the mean compared to Option 2 (-4.0), and thus has a higher probability density and is more plausible. The bet is placed because there is a higher likelihood that -3.32 is the next sample than -4.0, but the expected profit calculation shows a positive expectation for placing the bet. The expected value of placing a $27.3 bet with a $100 payoff is $100 - $27.3 = $72.7 when correct, and -$27.3 when wrong. The odds need to be such that the probability of being correct times $72.7 minus the probability of being wrong times $27.3 is positive. Given the Gaussian probabilities, -3.32 is more probable, giving a positive expected profit for placing the bet.
\end{Verbatim}
\normalsize
\item Bet: ``Place Bet''
\end{itemize}

\paragraph{Reasoning example 3.}
\begin{itemize}[left=0pt,nosep]
\item Observed: {"X5": -1.98, "X13": 0.03, "X3": -1.14, "X7": -0.65, "X4": -0.73, "X8": 0.81, "X10": 1.72, "X1": 1.24, "X2": -1.31, "X15": 0.37, "X14": -1.4, "X6": -1.17, "X16": 0.73, "X11": -0.3, "X12": 0.16}
\item Option 1: {"X9": 0.14}, Option 2: {"X9": 0.57}
\item Bet amount: \$37.56
\item Reasoning:  \small
\begin{Verbatim}[breaklines=true]
Given two options for the next sample from a Gaussian distribution with mean 0.0 and standard deviation 1.0, both samples (0.14 and 0.57) are reasonably close to the mean. However, the sample closer to the mean (Option 1, X9 = 0.14) is marginally more likely to be drawn from the specified distribution than the sample farther from the mean (Option 2, X9 = 0.57). Thus, it's slightly more plausible that Option 1 is the correct next sample. The decision to place a bet is based on the slightly higher likelihood of Option 1 according to the given distribution, and the potential payout justifies taking the bet despite the small difference in probability.
\end{Verbatim}
\normalsize
\item Bet: ``Place Bet''
\end{itemize}

\paragraph{Reasoning example 4.}
\begin{itemize}[left=0pt,nosep]
\item Observed: {"X6": 0.22, "X8": -1.24, "X5": 2.68, "X12": -3.18, "X16": 1.36, "X9": -0.83, "X3": -3.32, "X7": -2.51, "X4": -1.58, "X11": 0.29, "X15": -3.78, "X10": 0.31, "X13": 0.97, "X14": 2.01, "X1": -0.66}
\item Option 1: {"X2": 1.76}, Option 2: {"X2": 0.41}
\item Bet amount: \$91.64
\item Reasoning:  \small
\begin{Verbatim}[breaklines=true]
The given samples are from a Gaussian distribution with mean 0.0 and standard deviation 1.0. We have to determine which option is more plausible as the next sample. To decide, we can compute the probability density function (PDF) of a Gaussian distribution for each option and compare them. A higher PDF value suggests that the option is more plausible. The PDF of a Gaussian distribution is given by: f(x) = (1 / (\sigma\sqrt(2\pi))) * e^(-((x - \mu)^2) / (2\sigma^2)), where \mu is the mean, \sigma is the standard deviation, and x is the value. In this case, \mu = 0.0 and \sigma = 1.0. Let's compute the PDF for Option 1 (x = 1.76) and Option 2 (x = 0.41). For Option 1: f(1.76) = (1 / (1.0\sqrt(2\pi))) * e^(-((1.76 - 0.0)^2) / (2 * 1.0^2)) = (1 / \sqrt(2\pi)) * e^(-1.76^2 / 2) = 0.109 For Option 2: f(0.41) = (1 / (1.0\sqrt(2\pi))) * e^(-((0.41 - 0.0)^2) / (2 * 1.0^2)) = (1 / \sqrt(2\pi)) * e^(-0.41^2 / 2) = 0.456 Since the PDF for Option 2 is higher than for Option 1, Option 2 is more plausible as the next sample. However, placing a bet that Option 1 is more plausible will lose money, as it is less plausible. Therefore, we should not place the bet. The expected profit if we place the bet is -$91.64, since we would lose the entire amount if we are incorrect. If we do not place the bet, our expected profit is $0.
\end{Verbatim}
\normalsize
\item Bet: ``Do Not Place Bet''
\end{itemize}

\paragraph{Reasoning example 5.}
\begin{itemize}[left=0pt,nosep]
\item Observed: {"X13": 2.32, "X16": 1.48, "X8": 0.41, "X11": -0.75, "X14": 0.93, "X3": 1.58, "X9": -1.3, "X12": 0.97, "X1": 0.57, "X2": 2.68, "X7": 1.67, "X4": 0.41, "X5": 0.09, "X15": -1.33, "X10": -2.94}
\item Option 1: {"X6": 0.48}, Option 2: {"X6": 0.42}
\item Bet amount: \$31.18
\item Reasoning:  \small
\begin{Verbatim}[breaklines=true]
The original distribution has a mean of 0.0 and standard deviation of 1.0. The given samples include values ranging from around -2.94 to 2.68, suggesting that the sample mean and variance are close to the distribution's parameters. For a Gaussian distribution, any specific value's plausibility is determined by its probability density, which is higher for values closer to the mean. Among the two options, X6 = 0.42 is closer to the mean of 0.0 than X6 = 0.48. However, the difference between 0.42 and 0.48 is very small, so the plausibility difference is negligible. Given that the payout for a correct bet is $100 and the cost is $31.18, I decide to place a bet because even a small edge justifies the expected profit. But I must analyze which is more probable. The probability density function for a standard normal is f(x) = (1/sqrt(2*pi)) * exp(-x^2/2). Calculating the ratio: f(0.42)/f(0.48) = exp(-(0.42^2 - 0.48^2)/2) = exp(-(0.1764 - 0.2304)/2) = exp(0.027) \approx 1.0274, so 0.42 is about 2.74% more probable than 0.48. The expected value of betting on 0.42 being more plausible is: 0.5274 * $100 - $31.18 \approx $52.74 - $31.18 = $21.56, which is positive. Therefore, I place the bet.
\end{Verbatim}
\normalsize
\item Bet: ``Place Bet''
\end{itemize}

\clearpage
\subsection{Results on Different Targets} \label{app:results:sampling:different_targets}
We provide additional results from different synthetic target distributions: Binomial distribution $\text{Bin}(20, 0.3)$ and Gaussian mixture model $\text{GMM}(x)=0.8\mathcal{N}(-2.0,0.5)+0.2\mathcal{N}(2.0,0.5)$. \cref{fig:sampling_base_different_targets,fig:sampling_instruct_different_targets} show the results from Llama-3.1-8B (base) and Llama-3.1-8B-Instruct, respectively. The trends are consistent with \cref{fig:sampling_base,fig:sampling_instruction_tuned}: independent sampling is biased, batch sampling is highly autocorrelated, and Gibbs sampling improves both, while it is not very clear in GMM target.

\begin{figure*}[h]
  \centering
  \captionsetup[subfigure]{labelformat=empty}
  \begin{subfigure}[b]{0.8\linewidth}
    \includegraphics[width=1.0\linewidth]{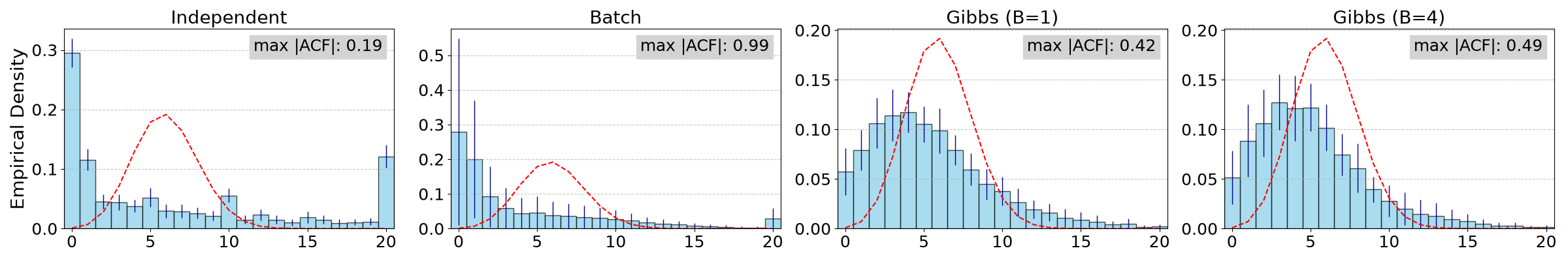}
  \end{subfigure}
  \begin{subfigure}[b]{0.8\linewidth}
    \includegraphics[width=1.0\linewidth]{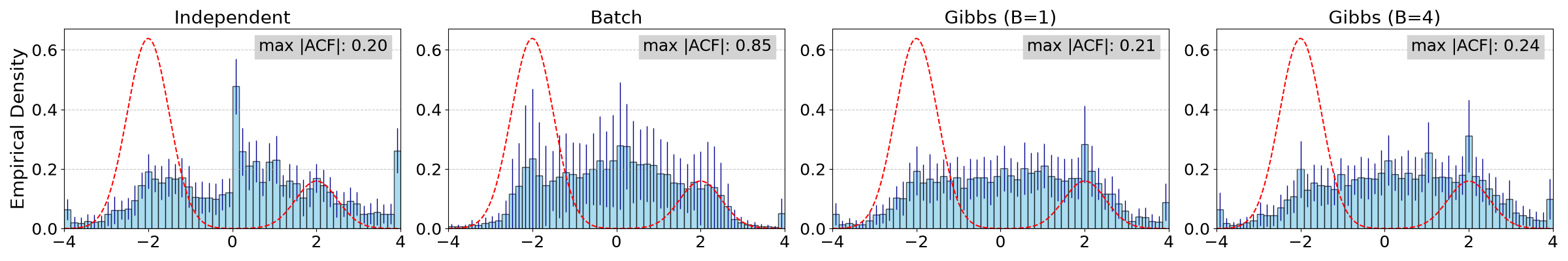}
  \end{subfigure}
  \vspace*{-0.5em}
  \caption{Empirical distribution of generated samples from \textbf{Llama-3.1-8B (base)} for target distribution as $\text{Bin}(20, 0.3)$ (top) and $\text{GMM}$ (bottom). Results are aggregated over 25 seeds, with 256 samples generated per seed. Maximum autocorrelation over 1-128 lags and over 25 runs is also reported ($\max|\text{ACF}|$).}
  \label{fig:sampling_base_different_targets}
\end{figure*}

\begin{figure*}[h]
  \centering
  \captionsetup[subfigure]{labelformat=empty}
  \begin{subfigure}[b]{1.0\linewidth}
    \includegraphics[width=1.0\linewidth]{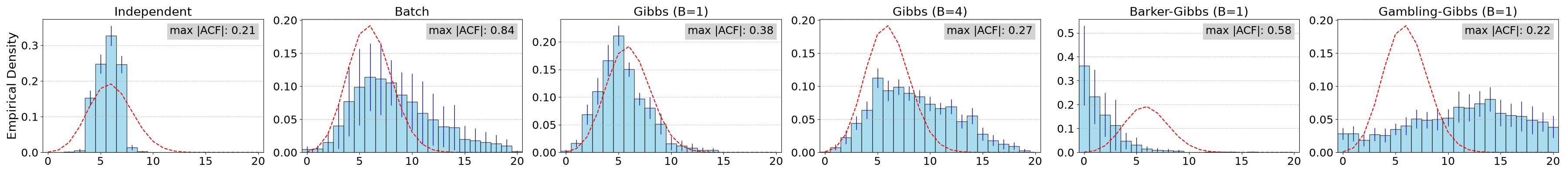}
  \end{subfigure}
  \begin{subfigure}[b]{1.0\linewidth}
    \includegraphics[width=1.0\linewidth]{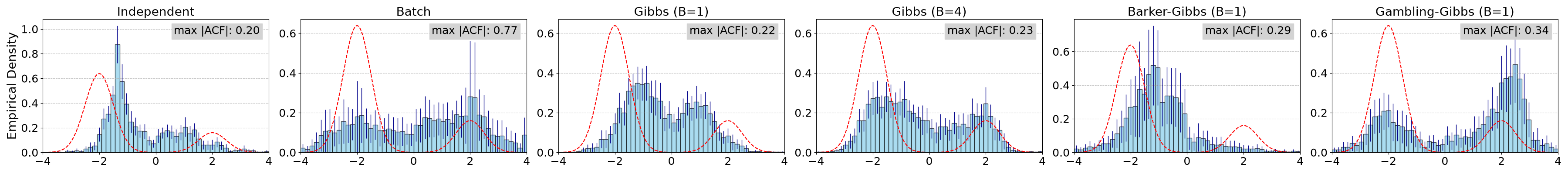}
  \end{subfigure}
  \vspace*{-1.5em}
  \caption{Empirical distribution of generated samples from \textbf{Llama-3.1-8B-Instruct} for target distribution as $\text{Bin}(20, 0.3)$ (top) and $\text{GMM}$ (bottom), same format as \cref{fig:sampling_base_different_targets}.}
  \label{fig:sampling_instruct_different_targets}
  \vspace*{-0.5em}
\end{figure*}

\clearpage
\section{Additional Results for Consistent Reasoning (\cref{sec:consistent_reasoning})}
\label{app:results:consistent_reasoning}

\subsection{Results from Instruction-Tuned Models}
\label{app:results:consistent_reasoning:instruct}

We provide results for consistent reasoning with instruction-tuned models, Llama-3.1-8B-Instruct and OLMo-3.1-32B-Instruct. The experimental setup is identical to that provided in \cref{app:setups:consistent_reasoning}, except for a slight modification to the prompt. \cref{tab:consistent_reasoning_instruction_tuned} shows the results. Gibbs sampling still outperforms the non-iterative baselines and ICM, except for OLMo-3.1-32B-Instruct on GSM8K-Verification, which we believe is because the model is already so strong that the gain from self-consistency is small.

\begin{table}[h]
  \centering
  \resizebox{0.8\linewidth}{!}{
    \begin{tabular}{@{}lcccc}
      \toprule
      Task $\rightarrow$ & \multicolumn{2}{c}{TruthfulQA} & \multicolumn{2}{c}{GSM8K-Verification} \\
      \cmidrule(lr){2-3} \cmidrule(lr){4-5}
      Alg. $\downarrow$ LLM $\rightarrow$ & \small Llama-3.1-8B-Instruct & \small OLMo-3.1-32B-Instruct & \small Llama-3.1-8B-Instruct & \small OLMo-3.1-32B-Instruct \\
      \midrule
      Zero-shot (greedy) & 0.658 & 0.755 & 0.668 & 0.764 \\
      \midrule
      AR-Fixed, $n = 4$ & 0.729\std{0.005} & 0.793\std{0.001} & 0.741\std{0.002} & 0.862\std{0.002} \\
      \phantom{AR-Fixed,} $n = 16$ & 0.749\std{0.005} & 0.802\std{0.003} & 0.776\std{0.007} & 0.872\std{0.006} \\
      \phantom{AR-Fixed,} $n = 64$ & 0.770\std{0.006} & 0.815\std{0.005} & 0.797\std{0.003} & 0.887\std{0.004} \\
      \midrule
      AR-Random, $n = 4$ & 0.779\std{0.005} & 0.794\std{0.006} & 0.835\std{0.006} & 0.915\std{0.004} \\
      \phantom{AR-Random,} $n = 16$ & 0.771\std{0.004} & 0.804\std{0.004} & 0.825\std{0.004} & 0.916\std{0.002} \\
      \phantom{AR-Random,} $n = 64$ & 0.780\std{0.006} & 0.818\std{0.006} & 0.822\std{0.005} & \textbf{0.922\std{0.003}} \\
      \midrule
      ICM, $n = 64$ & 0.777\std{0.007} & 0.817\std{0.005} & 0.794\std{0.012} & 0.919\std{0.004} \\
      \midrule
      Gibbs, $n = 4$ & 0.771\std{0.004} & 0.803\std{0.005} & \textbf{0.851\std{0.006}} & 0.921\std{0.002} \\
      \phantom{Gibbs,} $n = 16$ & 0.771\std{0.002} & 0.811\std{0.008} & 0.833\std{0.003} & 0.919\std{0.001} \\
      \phantom{Gibbs,} $n = 64$ & \textbf{0.793\std{0.004}} & \textbf{0.835\std{0.005}} & 0.843\std{0.003} & \textbf{0.922\std{0.002}} \\
      \bottomrule
    \end{tabular}
  }
  \caption{Task accuracies on the consistent reasoning tasks with instruction-tuned models (mean$\pm$std over five seeds).}
  \label{tab:consistent_reasoning_instruction_tuned}
\end{table}

\subsection{Runtime}
\label{app:results:consistent_reasoning:runtime}

\paragraph{Setup.} We report the runtimes for the Llama-3.1-8B with the default settings and additional settings with different hyperparameters for each algorithm.
All runtimes are obtained using a single NVIDIA H200 GPU and the following vLLM configurations:
\begin{lstlisting}
  vllm serve meta-llama/Llama-3.1-8B --enable-prefix-caching --max-model-len 16384 --gpu-memory-utilization 0.8 --max-num-seqs 256 --max-num-batched-tokens 131072
\end{lstlisting}

To maximise the GPU utilisation, we launch several parallel workers to process the 1024 question-claim pairs (see \cref{app:setups:consistent_reasoning}): independent client processes that send inference requests to the shared vLLM server concurrently. Zero-shot uses 256 workers. For AR-Fixed, AR-Random, and Gibbs, the dataset is split into $1024/n$ partitions, and we use one worker per partition, hence 256, 64, 16 workers for $n=4,16,64$ respectively. ICM ($n=64$) uses 4 workers since the original implementation is already optimised to send up to $n$ requests per worker. For this experiment, we set $\alpha=1.0$ for ICM and note that its runtime only slightly changes with $\alpha$ values.

We report the runtimes in \cref{tab:consistent_reasoning_runtime}, which shows that Gibbs is on the Pareto front and dominates ICM. Note that the accuracies are slightly different from \cref{tab:consistent_reasoning} since we re-run the experiments with three new seeds.

\begin{table}[h!]
  \centering
  \resizebox{0.8\linewidth}{!}{
    \begin{tabular}{@{}lccccc}
      \toprule
      Task $\rightarrow$ & & \multicolumn{2}{c}{TruthfulQA} & \multicolumn{2}{c}{GSM8K-Verification} \\
      \cmidrule(lr){3-4} \cmidrule(lr){5-6}
      Alg. $\downarrow$ & Workers & Acc. & Time & Acc. & Time \\
      \midrule
      Zero-shot (greedy) & 256 & 0.588 & <5s & 0.533 & <5s \\
      \midrule
      AR-Fixed, $n = 4, K = 25$ (default) & 256 & 0.654\std{0.013} & 1m & 0.658\std{0.013} & 1m \\
      \phantom{AR-Fixed, $n = 4$, }$K = 50$ & 256 & 0.650\std{0.005} & 2m & 0.675\std{0.006} & 2m \\
      \phantom{AR-Fixed, $n = 4$, }$K = 100$ & 256 & 0.658\std{0.008} & 4m & 0.672\std{0.008} & 4m \\
      \phantom{AR-Fixed,} $n = 16, K = 25$ (default) & 64 & 0.678\std{0.009} & 3m & 0.700\std{0.003} & 3m \\
      \phantom{AR-Fixed, $n = 16$, }$K = 50$ & 64 & 0.682\std{0.012} & 5m & 0.708\std{0.003} & 5m \\
      \phantom{AR-Fixed, $n = 16$, }$K = 100$ & 64 & 0.685\std{0.013} & 10m & 0.706\std{0.007} & 10m \\
      \phantom{AR-Fixed,} $n = 64, K = 25$ (default) & 16 & 0.714\std{0.003} & 5m & 0.757\std{0.012} & 7m \\
      \phantom{AR-Fixed, $n = 64$, }$K = 50$ & 16 & 0.717\std{0.009} & 8m & 0.760\std{0.009} & 14m \\
      \phantom{AR-Fixed, $n = 64$, }$K = 100$ & 16 & 0.725\std{0.008} & 16m & 0.761\std{0.015} & 27m \\
      \midrule
      AR-Random, $n = 4, K = 25$ (default) & 256 & 0.670\std{0.005} & 1m & 0.693\std{0.008} & 1m \\
      \phantom{AR-Random, $n = 4$, }$K = 50$ & 256 & 0.669\std{0.006} & 2m & 0.709\std{0.004} & 2m \\
      \phantom{AR-Random, $n = 4$, }$K = 100$ & 256 & 0.691\std{0.009} & 4m & 0.713\std{0.004} & 4m \\
      \phantom{AR-Random,} $n = 16, K = 25$ (default) & 64 & 0.701\std{0.005} & 3m & 0.735\std{0.006} & 3m \\
      \phantom{AR-Random, $n = 16$, }$K = 50$ & 64 & 0.716\std{0.008} & 5m & 0.743\std{0.006} & 5m \\
      \phantom{AR-Random, $n = 16$, }$K = 100$ & 64 & 0.710\std{0.001} & 10m & 0.755\std{0.006} & 10m \\
      \phantom{AR-Random,} $n = 64, K = 25$ (default) & 16 & 0.720\std{0.010} & 5m & 0.792\std{0.011} & 7m \\
      \phantom{AR-Random, $n = 64$, }$K = 50$ & 16 & 0.718\std{0.004} & 8m & 0.804\std{0.003} & 14m \\
      \phantom{AR-Random, $n = 64$, }$K = 100$ & 16 & 0.728\std{0.003} & 16m & 0.799\std{0.006} & 27m \\
      \midrule
      ICM, $n = 64$, $N_{\text{iter}} = 500$ & 4 & 0.709\std{0.020} & 12m & 0.701\std{0.022} & 1.4h \\
      \phantom{ICM, $n = 64$, }$N_{\text{iter}} = 1000$ (default) & 4 & 0.701\std{0.010} & 14m & 0.699\std{0.020} & 1.6h \\
      \phantom{ICM, $n = 64$, }$N_{\text{iter}} = 2000$ & 4 & 0.707\std{0.004} & 18m & 0.675\std{0.030} & 2h \\
      \midrule
      Gibbs, $n = 4$, \texttt{thinning} $=2$ & 256 & 0.699\std{0.007} & 30s & 0.701\std{0.006} & 30s \\
      \phantom{Gibbs, $n = 4$, }\texttt{thinning} $=4$ (default) & 256 & 0.708\std{0.010} & 1m & 0.710\std{0.009} & 1m \\
      \phantom{Gibbs, }$n = 16$, \texttt{thinning} $=8$ & 64 & 0.701\std{0.005} & 4m & 0.795\std{0.013} & 19m \\
      \phantom{Gibbs, $n = 16$, }\texttt{thinning} $=16$ (default) & 64 & 0.715\std{0.008} & 8m & 0.786\std{0.006} & 38m \\
      \phantom{Gibbs, }$n = 64$, \texttt{thinning} $=32$ & 16 & 0.743\std{0.008} & 16m & 0.889\std{0.005} & 1.5h \\
      \phantom{Gibbs, $n = 64$, }\texttt{thinning} $=64$ & 16 & 0.744\std{0.010} & 32m & 0.888\std{0.006} & 3h \\
      \phantom{Gibbs, }$n = 64^*$, \texttt{thinning} $=32$ & 16 & 0.742\std{0.003} & 5m & 0.880\std{0.008} & 18m \\
      \phantom{Gibbs, $n = 64^*$, }\texttt{thinning} $=64$ (default) & 16 & \textbf{0.753\std{0.010}} & 9m & \textbf{0.890\std{0.005}} & 36m \\
      \bottomrule
    \end{tabular}
  }
  \caption{Task accuracies (mean$\pm$std over three seeds) and mean runtime on the consistent reasoning tasks with Llama-3.1-8B (base). The accuracies are slightly different from \cref{tab:consistent_reasoning} since we re-run the experiments with three new seeds. Gibbs with $n = 64^*$ uses the hierarchical sweeping to enable KV caching for faster inference (see \cref{app:setups:consistent_reasoning:gibbs} for details).}
  \label{tab:consistent_reasoning_runtime}
\end{table}

\clearpage
\section{Additional Results for Bayesian Structure Learning (\cref{sec:structure_learning})}
\label{app:results:structure_learning}

\subsection{Results from OLMo-3-32B}
\label{app:results:structure_learning:olmo}

\cref{fig:structure_learning_olmo} shows results for the OLMo-3-1125-32B (base) and OLMo-3.1-32B-Instruct models. All other setups are identical to those described in \cref{app:setups:structure_learning}. 

\begin{figure*}[h]
  \centering
  \captionsetup[subfigure]{labelformat=empty}
  \begin{subfigure}[b]{0.495\linewidth}
    \includegraphics[width=1.0\linewidth]{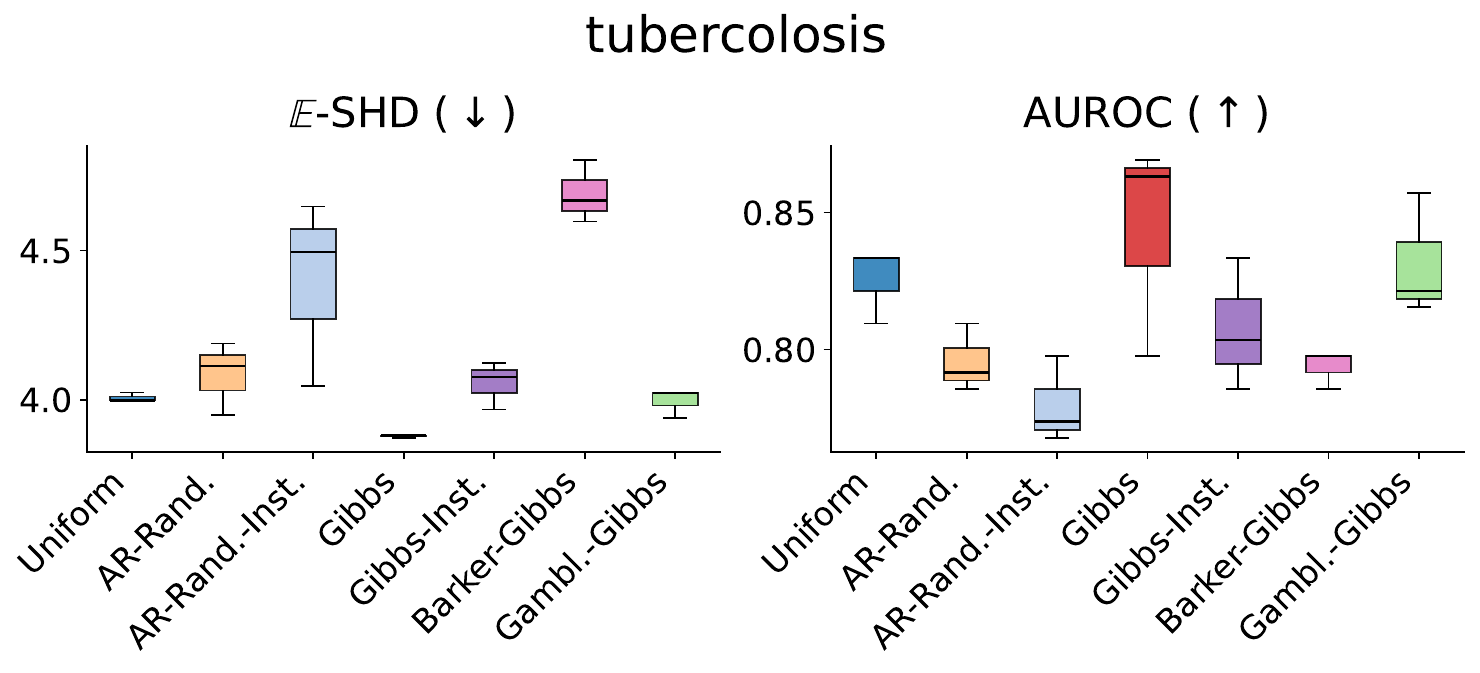}
  \end{subfigure}
  \begin{subfigure}[b]{0.495\linewidth}
    \includegraphics[width=1.0\linewidth]{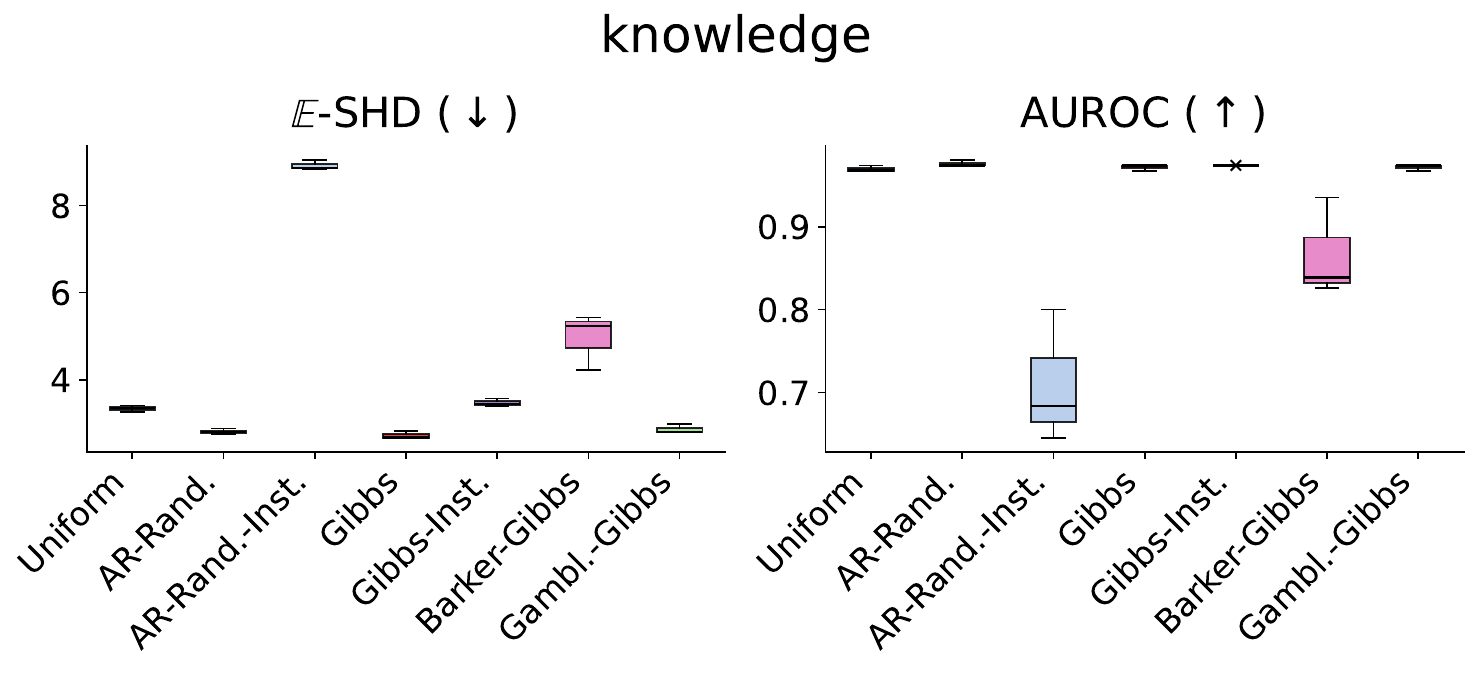}
  \end{subfigure}
  \begin{subfigure}[b]{0.495\linewidth}
    \includegraphics[width=1.0\linewidth]{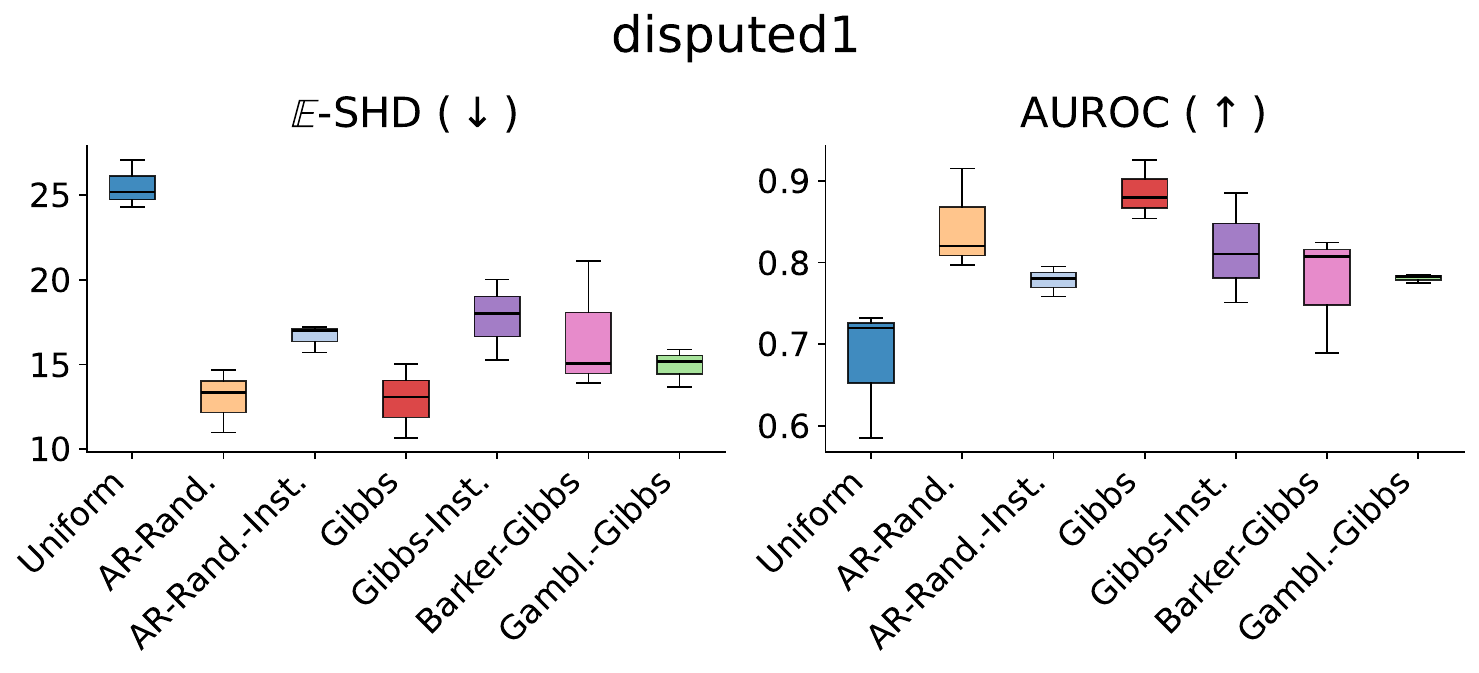}
  \end{subfigure}
  \begin{subfigure}[b]{0.495\linewidth}
    \includegraphics[width=1.0\linewidth]{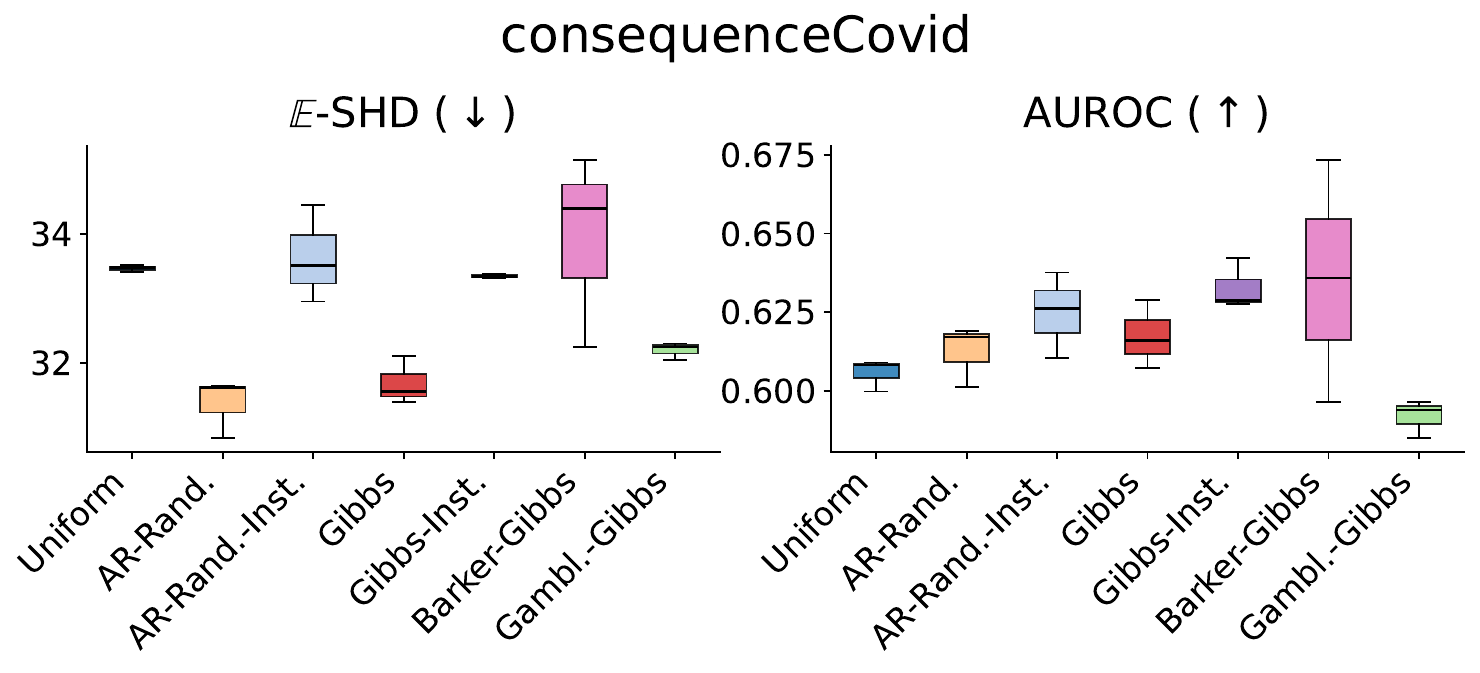}
  \end{subfigure}
  \caption{Bayesian structure learning results on four BnRep datasets using OLMo-3-32B (3 seeds).}
  \label{fig:structure_learning_olmo}
\end{figure*}

\subsection{Results from Different $\gamma$ Values}
\label{app:results:structure_learning:gamma}

In \eqref{eq:llm_prior}, the hyperparameter $\gamma$ controls the effective size of $D_{\text{LLM}}$. Under the Bayesian Dirichlet equivalent score \citep[BDe;][]{heckerman1995learning}, the effective size equals $\gamma \vert D_{\text{LLM}} \vert$. For the experiments in \cref{sec:structure_learning}, we fixed $\gamma= 0.5$ with $\vert D_{\text{LLM}} \vert =200$ and $\vert D \vert =100$, \ie, the effective size of the synthetic LLM dataset matches that of the ground truth dataset.

\cref{tab:structure_learning_gamma_ablation} presents additional results for $\gamma \in \{0.1, 0.2, 0.5, 1.0\}$ obtained via \textbf{AR-Random} and \textbf{Gibbs} sampling with Llama-3.1-8B (base). Overall, $\gamma = 0.2$ or $\gamma = 0.5$ yields the best results for each method, indicating that the synthetic LLM dataset is useful. When $\gamma = 1.0$, AUROC falls below the \textbf{Uniform} baseline in all settings except \texttt{disputed1}, indicating that overreliance on the synthetic data can make the downstream structure learning algorithm suboptimal, reaffirming that LLMs are \emph{imperfect} experts.

\begin{table}[h]
  \centering
  \resizebox{\linewidth}{!}{
    \begin{tabular}{@{}lcccccccc}
      \toprule
      Dataset $\rightarrow$ & \multicolumn{2}{c}{\texttt{tubercolosis}} & \multicolumn{2}{c}{\texttt{knowledge}} & \multicolumn{2}{c}{\texttt{disputed1}} & \multicolumn{2}{c}{\texttt{consequenceCovid}} \\
      \cmidrule(lr){2-3} \cmidrule(lr){4-5} \cmidrule(lr){6-7} \cmidrule(lr){8-9}
      Alg. $\downarrow$ Metric $\rightarrow$ 
      & $\mathbb{E}\text{-SHD}$ $(\downarrow)$ & AUROC $(\uparrow)$ 
      & $\mathbb{E}\text{-SHD}$ $(\downarrow)$ & AUROC $(\uparrow)$ 
      & $\mathbb{E}\text{-SHD}$ $(\downarrow)$ & AUROC $(\uparrow)$ 
      & $\mathbb{E}\text{-SHD}$ $(\downarrow)$ & AUROC $(\uparrow)$ \\
      \midrule
      Uniform (\ie, $\gamma=0$) 
      & 4.007\std{0.012} & 0.825\std{0.011}
      & 3.338\std{0.065} & 0.970\std{0.003}
      & 25.507\std{1.159} & 0.679\std{0.067}
      & 33.466\std{0.043} & 0.606\std{0.004}
      \\
      \midrule
      AR-Random, $\gamma=0.1$ 
      & 3.985\std{0.040} & 0.815\std{0.008}
      & 3.120\std{0.008} & 0.974\std{0.000}
      & 19.554\std{0.366} & 0.691\std{0.026}
      & 32.684\std{0.079} & 0.611\std{0.013}
      \\
      \phantom{AR-Random, }$\gamma=0.2$ 
      & 3.967\std{0.081} & 0.843\std{0.051}
      & 2.881\std{0.110} & 0.974\std{0.000}
      & 16.317\std{1.356} & 0.781\std{0.037}
      & 32.282\std{0.055} & 0.619\std{0.016}
      \\
      \phantom{AR-Random, }$\gamma=0.5$ 
      & 3.984\std{0.127} & 0.817\std{0.011}
      & 2.888\std{0.138} & 0.974\std{0.000}
      & 15.467\std{0.780} & 0.779\std{0.048}
      & 31.697\std{0.159} & 0.608\std{0.005}
      \\
      \phantom{AR-Random, }$\gamma=1.0$ 
      & 4.060\std{0.166} & 0.788\std{0.061}
      & 3.531\std{0.226} & 0.932\std{0.025}
      & 12.410\std{2.224} & 0.843\std{0.038}
      & 32.032\std{0.499} & 0.577\std{0.026}
      \\
      \midrule
      Gibbs, $\gamma=0.1$ 
      & 3.970\std{0.009} & 0.823\std{0.007}
      & 3.105\std{0.129} & 0.972\std{0.003}
      & 18.650\std{0.609} & 0.765\std{0.037}
      & 32.750\std{0.015} & 0.621\std{0.003}
      \\
      \phantom{Gibbs, }$\gamma=0.2$ 
      & 3.909\std{0.031} & 0.845\std{0.035}
      & 2.882\std{0.117} & 0.974\std{0.000}
      & 17.146\std{1.160} & 0.798\std{0.061}
      & 32.515\std{0.100} & 0.619\std{0.007}
      \\
      \phantom{Gibbs, }$\gamma=0.5$ 
      & 3.894\std{0.023} & 0.857\std{0.010}
      & 2.686\std{0.024} & 0.972\std{0.003}
      & 14.020\std{0.889} & 0.843\std{0.034}
      & 31.989\std{0.237} & 0.621\std{0.017}
      \\
      \phantom{Gibbs, }$\gamma=1.0$ 
      & 3.877\std{0.052} & 0.794\std{0.046}
      & 3.384\std{0.361} & 0.944\std{0.016}
      & 10.964\std{0.125} & 0.898\std{0.030}
      & 31.948\std{0.579} & 0.610\std{0.029}
      \\
      \bottomrule
    \end{tabular}
  }
  \caption{Results from different $\gamma$ values, using Llama-3.1-8B (base), mean$\pm$std over 3 seeds.}
  \label{tab:structure_learning_gamma_ablation}
\end{table}

\clearpage
\subsection{A Failure Case of Large Language Gibbs}
\label{app:failure_cases}

\begin{wraptable}{r}{0.3\textwidth}
\vspace*{-1.2em}
\centering
\begin{adjustbox}{width=0.7\linewidth, keepaspectratio}
\begin{tabular}{@{}lc}
\toprule
\textbf{Attribute} & \textbf{Possible values} \\
\midrule
C & low, high \\
Chl\_a & low, high \\
DO & low, high \\
N & low, high \\
P & low, high \\
pH & low, high \\
Te & low, high \\
Tu & low, high \\
\bottomrule
\end{tabular}
\end{adjustbox}
\caption{\label{tab:algalmetadata}Dataset metadata. The global description is: ``A dataset of discrete variables to predict chlorophyll-a (chl-a) using a range of water quality parameters as predictors''.}
\end{wraptable}

Here we show a case where augmenting Bayesian structure learning with large language Gibbs data fails to improve, and even hurts, structure discovery. We consider the \texttt{algalactivity2} dataset from BnRep, whose metadata is shown in \cref{tab:algalmetadata}. As we see, the metadata provides little information, and the possible values are similarly not linked to any ranges that could link them to a language model's world knowledge. In this setting, structure learning with a simple uniform prior recovers the graph better than when augmented with synthetic data generated with such a description (\cref{fig:failure_case}), which shows the importance of an informative data specification for the success of large language Gibbs.

\begin{figure*}[h]
    \vspace*{4em}
    \centering
    \includegraphics[width=0.6\linewidth]{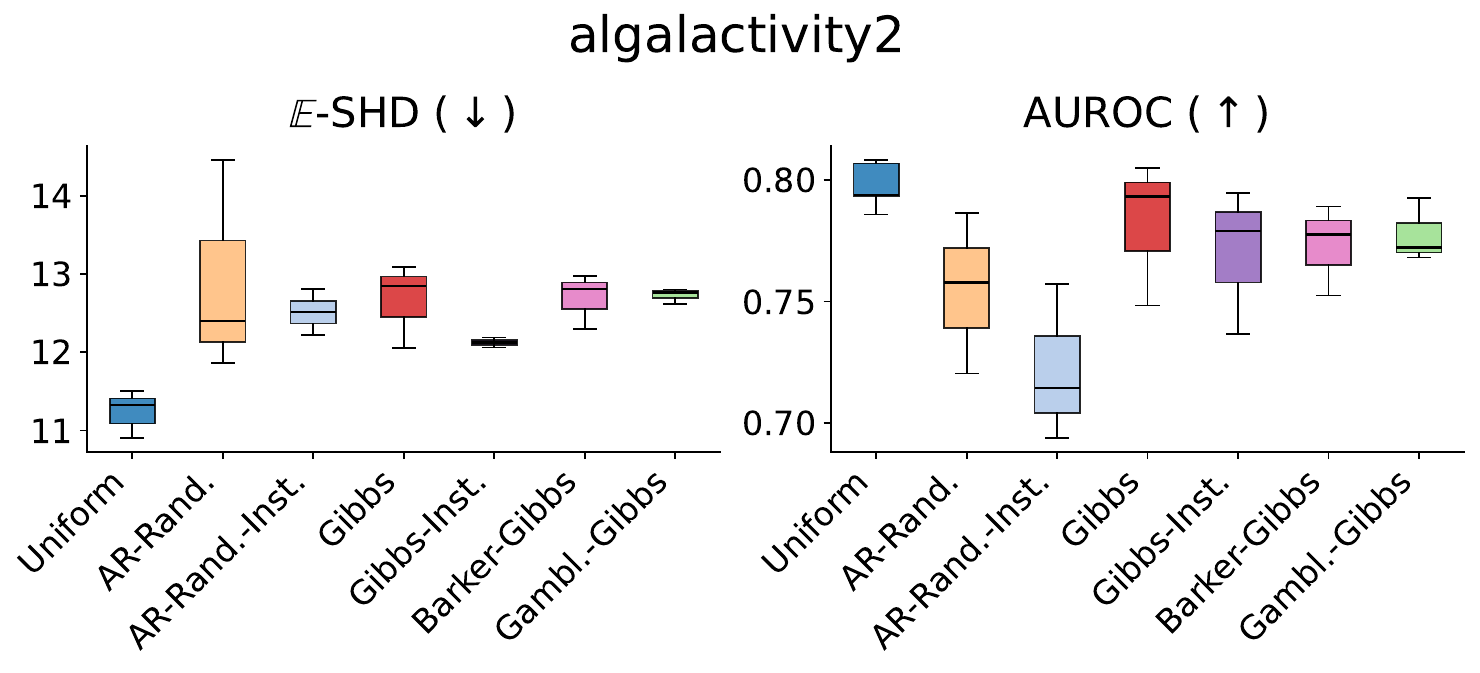}
    \caption{A failure case of Bayesian structure learning with synthetic data generated by large language Gibbs, cf.\ \cref{fig:structure_learning}.\label{fig:failure_case}}
\end{figure*}

\end{document}

%% file: figures/gibbs-cat.tex
\begin{tikzpicture}[remember picture,
    x=1cm,
    y=1cm,
    line width=1pt,
    every node/.style={inner sep=3pt, outer sep=3pt},
    box/.style={
        rectangle,
        draw=black,
        fill=white
    }
]\tt

\pgfdeclarelayer{background}
\pgfdeclarelayer{backerground}
\pgfsetlayers{backerground,background,main}

\node[box, fill=green!10] (metadata) at (0,0) {\metadatacontents};

\node[box, right=3.5cm of metadata, fill=purple!10] (sample1) {\samplecontents{shorthair}{tuxedo}{3}{12}};

\node[box, right=1cm of sample1, fill=purple!10] (sample2) {\samplecontents{shorthair}{\color{red}calico}{3}{12}};

\node[box, right=1cm of sample2, fill=purple!10] (sample3) {\samplecontents{shorthair}{calico}{3}{\color{red}14}};

\node[outer sep=3pt, right=1cm of sample3] (dots) {$\cdots$};

\node[box, right=1cm of dots, fill=purple!10] (sample4) {\samplecontents{\color{red}abyssinian}{tabby}{10}{15}};

\draw[->, >=stealth, decorate, decoration={snake, post length=4pt}] (sample1.east) -- (sample2.west) node[pos=0.5, above, anchor=south, inner sep=2pt] {\textrm{\large LM\vphantom{p}}};
\draw[->, >=stealth, decorate, decoration={snake, post length=4pt}] (sample2.east) -- (sample3.west) node[pos=0.5, above, anchor=south, inner sep=2pt] {\textrm{\large LM\vphantom{p}}};
\draw[->, >=stealth, decorate, decoration={snake, post length=4pt}] (sample3.east) -- (dots.west) node[pos=0.5, above, anchor=south, inner sep=2pt] {\textrm{\large LM\vphantom{p}}};
\draw[->, >=stealth, decorate, decoration={snake, post length=4pt}] (dots.east) -- (sample4.west) node[pos=0.5, above, anchor=south, inner sep=2pt] {\textrm{\large LM\vphantom{p}}};

\draw[->, >=stealth, decorate, decoration={snake, post length=4pt}] (metadata.east) -- (sample1.west) node[pos=0.5, above, anchor=south, inner sep=2pt] {\textrm{\large LM (autoreg.\ init.)}};

\node[above=0cm of metadata, inner sep=0pt,anchor=south] {\textrm{\large Metadata\vphantom{p}}};
\node[above=0cm of sample1, inner sep=0pt,anchor=south] {\textrm{\large Sample 1}};
\node[above=0cm of sample2, inner sep=0pt,anchor=south] {\textrm{\large Sample 2}};
\node[above=0cm of sample3, inner sep=0pt,anchor=south] {\textrm{\large Sample 3}};
\node[above=0cm of sample4, inner sep=0pt,anchor=south] {\textrm{\large Sample $n$}};

\coordinate (mid12) at ($(sample1.east)!0.5!(sample2.west) + (0,-0.2)$);
\coordinate (mid23) at ($(sample2.east)!0.5!(sample3.west) + (0,-0.2)$);
\coordinate (mid3d) at ($(sample3.east)!0.5!(dots.west) + (0,-0.2)$);
\coordinate (mid4d) at ($(dots.east)!0.5!(sample4.west) + (0,-0.2)$);
\coordinate (routebelow1) at ($(metadata.south) + (0,-0.5)$);
\coordinate (routebelow2) at ($(metadata.south) + (0,-0.7)$);
\coordinate (routebelow3) at ($(metadata.south) + (0,-0.9)$);
\coordinate (routebelow4) at ($(metadata.south) + (0,-1.1)$);

\draw[->, >=stealth] ([xshift=12pt,yshift=3pt]metadata.south) -- ([xshift=12pt]metadata.south |- routebelow1) -| (mid12) ;
\draw[->, >=stealth] ([xshift=4pt,yshift=3pt]metadata.south) -- ([xshift=4pt]metadata.south |- routebelow2) -|  (mid23) ;
\draw[->, >=stealth] ([xshift=-4pt,yshift=3pt]metadata.south) -- ([xshift=-4pt]metadata.south |- routebelow3) -| (mid3d) ;
\draw[->, >=stealth] ([xshift=-12pt,yshift=3pt]metadata.south) -- ([xshift=-12pt]metadata.south |- routebelow4) -| node[pos=0.25, left, anchor=north, inner sep=2pt] {\textrm{\large resampling prompt + constraints}} (mid4d) ;

\end{tikzpicture}

%% file: figures/gibbs-cr.tex
\begin{tikzpicture}[remember picture,
    x=1cm,
    y=1cm,
    line width=1pt,
    every node/.style={inner sep=3pt, outer sep=3pt},
    box/.style={
        rectangle,
        draw=black,
        fill=white
    }
]\tt

\pgfdeclarelayer{background}
\pgfdeclarelayer{backerground}
\pgfsetlayers{backerground,background,main}

\node[box, fill=green!10] (metadata) at (0,0) {\begin{tikzpicture}
        \node[anchor=west] (x1) at (0,-0) {The North Star is the brightest star in the night sky.\vphantom{pdl()}};
        \node[anchor=west] (x2) at (0,-0.5) {Sirius is the brightest star in the night sky.\vphantom{pdl()}};
        \node[anchor=west] (x3) at (0,-1) {Polaris is the brightest star in the night sky.\vphantom{pdl()}};
        \node[anchor=west] (x4) at (0,-1.5) {The sun is the brightest star in the night sky.\vphantom{pdl()}};
        \node[anchor=west] (m1) at (11,-0) {\color{blue}bool\vphantom{pdl()}};
        \node[anchor=west] (m2) at (11,-0.5) {\color{blue}bool\vphantom{pdl()}};
        \node[anchor=west] (m3) at (11,-1) {\color{blue}bool\vphantom{pdl()}};
        \node[anchor=west] (m4) at (11,-1.5) {\color{blue}bool\vphantom{pdl()}};
    \end{tikzpicture}};

\node[box, right=3.5cm of metadata, fill=purple!10] (sample1) {\samplecontents{True}{True}{False}{True}};

\node[box, right=1cm of sample1, fill=purple!10] (sample2) {\samplecontents{\color{red}False}{True}{False}{True}};

\node[box, right=1cm of sample2, fill=purple!10] (sample3) {\samplecontents{False}{True}{False}{\color{red}False}};

\node[outer sep=3pt, right=1cm of sample3] (dots) {$\cdots$};


\draw[->, >=stealth, decorate, decoration={snake, post length=4pt}] (sample1.east) -- (sample2.west) node[pos=0.5, above, anchor=south, inner sep=2pt] {\textrm{\large LM\vphantom{p}}};
\draw[->, >=stealth, decorate, decoration={snake, post length=4pt}] (sample2.east) -- (sample3.west) node[pos=0.5, above, anchor=south, inner sep=2pt] {\textrm{\large LM\vphantom{p}}};
\draw[->, >=stealth, decorate, decoration={snake, post length=4pt}] (sample3.east) -- (dots.west) node[pos=0.5, above, anchor=south, inner sep=2pt] {\textrm{\large LM\vphantom{p}}};

\draw[->, >=stealth, decorate, decoration={snake, post length=4pt}] (metadata.east) -- (sample1.west) node[pos=0.5, above, anchor=south, inner sep=2pt] {\textrm{\large LM (autoreg.\ init.)}};

\node[above=0cm of metadata, inner sep=0pt,anchor=south] {\textrm{\large \texttt{What is the brightest star in the night sky?}\vphantom{p}}};
\node[above=0cm of sample1, inner sep=0pt,anchor=south] {\textrm{\large Sample 1}};
\node[above=0cm of sample2, inner sep=0pt,anchor=south] {\textrm{\large Sample 2}};
\node[above=0cm of sample3, inner sep=0pt,anchor=south] {\textrm{\large Sample 3}};

\coordinate (mid12) at ($(sample1.east)!0.5!(sample2.west) + (0,-0.2)$);
\coordinate (mid23) at ($(sample2.east)!0.5!(sample3.west) + (0,-0.2)$);
\coordinate (mid3d) at ($(sample3.east)!0.5!(dots.west) + (0,-0.2)$);
\coordinate (routebelow1) at ($(metadata.south) + (0,-0.5)$);
\coordinate (routebelow2) at ($(metadata.south) + (0,-0.7)$);
\coordinate (routebelow3) at ($(metadata.south) + (0,-0.9)$);
\coordinate (routebelow4) at ($(metadata.south) + (0,-1.1)$);

\draw[->, >=stealth] ([xshift=12pt,yshift=3pt]metadata.south) -- ([xshift=12pt]metadata.south |- routebelow1) -| (mid12) ;
\draw[->, >=stealth] ([xshift=4pt,yshift=3pt]metadata.south) -- ([xshift=4pt]metadata.south |- routebelow2) -|  (mid23) ;
\draw[->, >=stealth] ([xshift=-4pt,yshift=3pt]metadata.south) -- ([xshift=-4pt]metadata.south |- routebelow3) -| node[pos=0.25, left, anchor=north, inner sep=2pt] {\textrm{\large resampling prompt + constraints}} (mid3d) ;

\end{tikzpicture}

%% file: references.bib
@inproceedings{devlin-etal-2019-bert,
    title = "{BERT}: Pre-training of Deep Bidirectional Transformers for Language Understanding",
    author = "Devlin, Jacob  and
      Chang, Ming-Wei  and
      Lee, Kenton  and
      Toutanova, Kristina",
    editor = "Burstein, Jill  and
      Doran, Christy  and
      Solorio, Thamar",
    booktitle = "Proceedings of the 2019 Conference of the North {A}merican Chapter of the Association for Computational Linguistics: Human Language Technologies, Volume 1 (Long and Short Papers)",
    month = jun,
    year = "2019",
    address = "Minneapolis, Minnesota",
    publisher = "Association for Computational Linguistics",
    url = "https://aclanthology.org/N19-1423/",
    doi = "10.18653/v1/N19-1423",
    pages = "4171--4186",
}

@article{domke2025large,
  title={Large language {Bayes}},
  author={Domke, Justin},
  journal={Neural Information Processing Systems (NeurIPS)},
  year={2025}
}

@inproceedings{kwon2023efficient,
  title={Efficient Memory Management for Large Language Model Serving with PagedAttention},
  author={Woosuk Kwon and Zhuohan Li and Siyuan Zhuang and Ying Sheng and Lianmin Zheng and Cody Hao Yu and Joseph E. Gonzalez and Hao Zhang and Ion Stoica},
  booktitle={Proceedings of the ACM SIGOPS 29th Symposium on Operating Systems Principles},
  year={2023}
}

@article{wei2022chain,
  title={Chain-of-thought prompting elicits reasoning in large language models},
  author={Wei, Jason and Wang, Xuezhi and Schuurmans, Dale and Bosma, Maarten and Ichter, Brian and Xia, Fei and Chi, Ed and Le, Quoc V and Zhou, Denny},
  journal={Neural Information Processing Systems (NeurIPS)},
  year={2022}
}

@article{grattafiori2024llama,
  title={The llama 3 herd of models},
  author={Aaron Grattafiori and Abhimanyu Dubey and Abhinav Jauhri and Abhinav Pandey and Abhishek Kadian and Ahmad Al-Dahle and Aiesha Letman and Akhil Mathur and Alan Schelten and Alex Vaughan and Amy Yang and Angela Fan and Anirudh Goyal and Anthony Hartshorn and Aobo Yang and Archi Mitra and Archie Sravankumar and Artem Korenev and Arthur Hinsvark and Arun Rao and Aston Zhang and Aurelien Rodriguez and Austen Gregerson and Ava Spataru and Baptiste Roziere and Bethany Biron and Binh Tang and Bobbie Chern and Charlotte Caucheteux and Chaya Nayak and Chloe Bi and Chris Marra and Chris McConnell and Christian Keller and Christophe Touret and Chunyang Wu and Corinne Wong and Cristian Canton Ferrer and Cyrus Nikolaidis and Damien Allonsius and Daniel Song and Danielle Pintz and Danny Livshits and Danny Wyatt and David Esiobu and Dhruv Choudhary and Dhruv Mahajan and Diego Garcia-Olano and Diego Perino and Dieuwke Hupkes and Egor Lakomkin and Ehab AlBadawy and Elina Lobanova and Emily Dinan and Eric Michael Smith and Filip Radenovic and Francisco Guzmán and Frank Zhang and Gabriel Synnaeve and Gabrielle Lee and Georgia Lewis Anderson and Govind Thattai and Graeme Nail and Gregoire Mialon and Guan Pang and Guillem Cucurell and Hailey Nguyen and Hannah Korevaar and Hu Xu and Hugo Touvron and Iliyan Zarov and Imanol Arrieta Ibarra and Isabel Kloumann and Ishan Misra and Ivan Evtimov and Jack Zhang and Jade Copet and Jaewon Lee and Jan Geffert and Jana Vranes and Jason Park and Jay Mahadeokar and Jeet Shah and Jelmer van der Linde and Jennifer Billock and Jenny Hong and Jenya Lee and Jeremy Fu and Jianfeng Chi and Jianyu Huang and Jiawen Liu and Jie Wang and Jiecao Yu and Joanna Bitton and Joe Spisak and Jongsoo Park and Joseph Rocca and Joshua Johnstun and Joshua Saxe and Junteng Jia and Kalyan Vasuden Alwala and Karthik Prasad and Kartikeya Upasani and Kate Plawiak and Ke Li and Kenneth Heafield and Kevin Stone and Khalid El-Arini and Krithika Iyer and Kshitiz Malik and Kuenley Chiu and Kunal Bhalla and Kushal Lakhotia and Lauren Rantala-Yeary and Laurens van der Maaten and Lawrence Chen and Liang Tan and Liz Jenkins and Louis Martin and Lovish Madaan and Lubo Malo and Lukas Blecher and Lukas Landzaat and Luke de Oliveira and Madeline Muzzi and Mahesh Pasupuleti and Mannat Singh and Manohar Paluri and Marcin Kardas and Maria Tsimpoukelli and Mathew Oldham and Mathieu Rita and Maya Pavlova and Melanie Kambadur and Mike Lewis and Min Si and Mitesh Kumar Singh and Mona Hassan and Naman Goyal and Narjes Torabi and Nikolay Bashlykov and Nikolay Bogoychev and Niladri Chatterji and Ning Zhang and Olivier Duchenne and Onur Çelebi and Patrick Alrassy and Pengchuan Zhang and Pengwei Li and Petar Vasic and Peter Weng and Prajjwal Bhargava and Pratik Dubal and Praveen Krishnan and Punit Singh Koura and Puxin Xu and Qing He and Qingxiao Dong and Ragavan Srinivasan and Raj Ganapathy and Ramon Calderer and Ricardo Silveira Cabral and Robert Stojnic and Roberta Raileanu and Rohan Maheswari and Rohit Girdhar and Rohit Patel and Romain Sauvestre and Ronnie Polidoro and Roshan Sumbaly and Ross Taylor and Ruan Silva and Rui Hou and Rui Wang and Saghar Hosseini and Sahana Chennabasappa and Sanjay Singh and Sean Bell and Seohyun Sonia Kim and Sergey Edunov and Shaoliang Nie and Sharan Narang and Sharath Raparthy and Sheng Shen and Shengye Wan and Shruti Bhosale and Shun Zhang and Simon Vandenhende and Soumya Batra and Spencer Whitman and Sten Sootla and Stephane Collot and Suchin Gururangan and Sydney Borodinsky and Tamar Herman and Tara Fowler and Tarek Sheasha and Thomas Georgiou and Thomas Scialom and Tobias Speckbacher and Todor Mihaylov and Tong Xiao and Ujjwal Karn and Vedanuj Goswami and Vibhor Gupta and Vignesh Ramanathan and Viktor Kerkez and Vincent Gonguet and Virginie Do and Vish Vogeti and Vítor Albiero and Vladan Petrovic and Weiwei Chu and Wenhan Xiong and Wenyin Fu and Whitney Meers and Xavier Martinet and Xiaodong Wang and Xiaofang Wang and Xiaoqing Ellen Tan and Xide Xia and Xinfeng Xie and Xuchao Jia and Xuewei Wang and Yaelle Goldschlag and Yashesh Gaur and Yasmine Babaei and Yi Wen and Yiwen Song and Yuchen Zhang and Yue Li and Yuning Mao and Zacharie Delpierre Coudert and Zheng Yan and Zhengxing Chen and Zoe Papakipos and Aaditya Singh and Aayushi Srivastava and Abha Jain and Adam Kelsey and Adam Shajnfeld and Adithya Gangidi and Adolfo Victoria and Ahuva Goldstand and Ajay Menon and Ajay Sharma and Alex Boesenberg and Alexei Baevski and Allie Feinstein and Amanda Kallet and Amit Sangani and Amos Teo and Anam Yunus and Andrei Lupu and Andres Alvarado and Andrew Caples and Andrew Gu and Andrew Ho and Andrew Poulton and Andrew Ryan and Ankit Ramchandani and Annie Dong and Annie Franco and Anuj Goyal and Aparajita Saraf and Arkabandhu Chowdhury and Ashley Gabriel and Ashwin Bharambe and Assaf Eisenman and Azadeh Yazdan and Beau James and Ben Maurer and Benjamin Leonhardi and Bernie Huang and Beth Loyd and Beto De Paola and Bhargavi Paranjape and Bing Liu and Bo Wu and Boyu Ni and Braden Hancock and Bram Wasti and Brandon Spence and Brani Stojkovic and Brian Gamido and Britt Montalvo and Carl Parker and Carly Burton and Catalina Mejia and Ce Liu and Changhan Wang and Changkyu Kim and Chao Zhou and Chester Hu and Ching-Hsiang Chu and Chris Cai and Chris Tindal and Christoph Feichtenhofer and Cynthia Gao and Damon Civin and Dana Beaty and Daniel Kreymer and Daniel Li and David Adkins and David Xu and Davide Testuggine and Delia David and Devi Parikh and Diana Liskovich and Didem Foss and Dingkang Wang and Duc Le and Dustin Holland and Edward Dowling and Eissa Jamil and Elaine Montgomery and Eleonora Presani and Emily Hahn and Emily Wood and Eric-Tuan Le and Erik Brinkman and Esteban Arcaute and Evan Dunbar and Evan Smothers and Fei Sun and Felix Kreuk and Feng Tian and Filippos Kokkinos and Firat Ozgenel and Francesco Caggioni and Frank Kanayet and Frank Seide and Gabriela Medina Florez and Gabriella Schwarz and Gada Badeer and Georgia Swee and Gil Halpern and Grant Herman and Grigory Sizov and Guangyi and Zhang and Guna Lakshminarayanan and Hakan Inan and Hamid Shojanazeri and Han Zou and Hannah Wang and Hanwen Zha and Haroun Habeeb and Harrison Rudolph and Helen Suk and Henry Aspegren and Hunter Goldman and Hongyuan Zhan and Ibrahim Damlaj and Igor Molybog and Igor Tufanov and Ilias Leontiadis and Irina-Elena Veliche and Itai Gat and Jake Weissman and James Geboski and James Kohli and Janice Lam and Japhet Asher and Jean-Baptiste Gaya and Jeff Marcus and Jeff Tang and Jennifer Chan and Jenny Zhen and Jeremy Reizenstein and Jeremy Teboul and Jessica Zhong and Jian Jin and Jingyi Yang and Joe Cummings and Jon Carvill and Jon Shepard and Jonathan McPhie and Jonathan Torres and Josh Ginsburg and Junjie Wang and Kai Wu and Kam Hou U and Karan Saxena and Kartikay Khandelwal and Katayoun Zand and Kathy Matosich and Kaushik Veeraraghavan and Kelly Michelena and Keqian Li and Kiran Jagadeesh and Kun Huang and Kunal Chawla and Kyle Huang and Lailin Chen and Lakshya Garg and Lavender A and Leandro Silva and Lee Bell and Lei Zhang and Liangpeng Guo and Licheng Yu and Liron Moshkovich and Luca Wehrstedt and Madian Khabsa and Manav Avalani and Manish Bhatt and Martynas Mankus and Matan Hasson and Matthew Lennie and Matthias Reso and Maxim Groshev and Maxim Naumov and Maya Lathi and Meghan Keneally and Miao Liu and Michael L. Seltzer and Michal Valko and Michelle Restrepo and Mihir Patel and Mik Vyatskov and Mikayel Samvelyan and Mike Clark and Mike Macey and Mike Wang and Miquel Jubert Hermoso and Mo Metanat and Mohammad Rastegari and Munish Bansal and Nandhini Santhanam and Natascha Parks and Natasha White and Navyata Bawa and Nayan Singhal and Nick Egebo and Nicolas Usunier and Nikhil Mehta and Nikolay Pavlovich Laptev and Ning Dong and Norman Cheng and Oleg Chernoguz and Olivia Hart and Omkar Salpekar and Ozlem Kalinli and Parkin Kent and Parth Parekh and Paul Saab and Pavan Balaji and Pedro Rittner and Philip Bontrager and Pierre Roux and Piotr Dollar and Polina Zvyagina and Prashant Ratanchandani and Pritish Yuvraj and Qian Liang and Rachad Alao and Rachel Rodriguez and Rafi Ayub and Raghotham Murthy and Raghu Nayani and Rahul Mitra and Rangaprabhu Parthasarathy and Raymond Li and Rebekkah Hogan and Robin Battey and Rocky Wang and Russ Howes and Ruty Rinott and Sachin Mehta and Sachin Siby and Sai Jayesh Bondu and Samyak Datta and Sara Chugh and Sara Hunt and Sargun Dhillon and Sasha Sidorov and Satadru Pan and Saurabh Mahajan and Saurabh Verma and Seiji Yamamoto and Sharadh Ramaswamy and Shaun Lindsay and Shaun Lindsay and Sheng Feng and Shenghao Lin and Shengxin Cindy Zha and Shishir Patil and Shiva Shankar and Shuqiang Zhang and Shuqiang Zhang and Sinong Wang and Sneha Agarwal and Soji Sajuyigbe and Soumith Chintala and Stephanie Max and Stephen Chen and Steve Kehoe and Steve Satterfield and Sudarshan Govindaprasad and Sumit Gupta and Summer Deng and Sungmin Cho and Sunny Virk and Suraj Subramanian and Sy Choudhury and Sydney Goldman and Tal Remez and Tamar Glaser and Tamara Best and Thilo Koehler and Thomas Robinson and Tianhe Li and Tianjun Zhang and Tim Matthews and Timothy Chou and Tzook Shaked and Varun Vontimitta and Victoria Ajayi and Victoria Montanez and Vijai Mohan and Vinay Satish Kumar and Vishal Mangla and Vlad Ionescu and Vlad Poenaru and Vlad Tiberiu Mihailescu and Vladimir Ivanov and Wei Li and Wenchen Wang and Wenwen Jiang and Wes Bouaziz and Will Constable and Xiaocheng Tang and Xiaojian Wu and Xiaolan Wang and Xilun Wu and Xinbo Gao and Yaniv Kleinman and Yanjun Chen and Ye Hu and Ye Jia and Ye Qi and Yenda Li and Yilin Zhang and Ying Zhang and Yossi Adi and Youngjin Nam and Yu and Wang and Yu Zhao and Yuchen Hao and Yundi Qian and Yunlu Li and Yuzi He and Zach Rait and Zachary DeVito and Zef Rosnbrick and Zhaoduo Wen and Zhenyu Yang and Zhiwei Zhao and Zhiyu Ma},
  journal={arXiv preprint arXiv:2407.21783},
  year={2024}
}

@article{olmo2025olmo,
  title={Olmo 3},
  author={Team Olmo and Allyson Ettinger and Amanda Bertsch and Bailey Kuehl and David Graham and David Heineman and Dirk Groeneveld and Faeze Brahman and Finbarr Timbers and Hamish Ivison and Jacob Morrison and Jake Poznanski and Kyle Lo and Luca Soldaini and Matt Jordan and Mayee Chen and Michael Noukhovitch and Nathan Lambert and Pete Walsh and Pradeep Dasigi and Robert Berry and Saumya Malik and Saurabh Shah and Scott Geng and Shane Arora and Shashank Gupta and Taira Anderson and Teng Xiao and Tyler Murray and Tyler Romero and Victoria Graf and Akari Asai and Akshita Bhagia and Alexander Wettig and Alisa Liu and Aman Rangapur and Chloe Anastasiades and Costa Huang and Dustin Schwenk and Harsh Trivedi and Ian Magnusson and Jaron Lochner and Jiacheng Liu and Lester James V. Miranda and Maarten Sap and Malia Morgan and Michael Schmitz and Michal Guerquin and Michael Wilson and Regan Huff and Ronan Le Bras and Rui Xin and Rulin Shao and Sam Skjonsberg and Shannon Zejiang Shen and Shuyue Stella Li and Tucker Wilde and Valentina Pyatkin and Will Merrill and Yapei Chang and Yuling Gu and Zhiyuan Zeng and Ashish Sabharwal and Luke Zettlemoyer and Pang Wei Koh and Ali Farhadi and Noah A. Smith and Hannaneh Hajishirzi},
  journal={arXiv preprint arXiv:2512.13961},
  year={2025}
}

@inproceedings{wang-cho-2019-bert,
    title = "{BERT} has a Mouth, and It Must Speak: {BERT} as a {M}arkov Random Field Language Model",
    author = "Wang, Alex  and
      Cho, Kyunghyun",
    editor = "Bosselut, Antoine  and
      Celikyilmaz, Asli  and
      Ghazvininejad, Marjan  and
      Iyer, Srinivasan  and
      Khandelwal, Urvashi  and
      Rashkin, Hannah  and
      Wolf, Thomas",
    booktitle = "Proceedings of the Workshop on Methods for Optimizing and Evaluating Neural Language Generation",
    month = jun,
    year = "2019",
    address = "Minneapolis, Minnesota",
    publisher = "Association for Computational Linguistics",
    url = "https://aclanthology.org/W19-2304/",
    doi = "10.18653/v1/W19-2304",
    pages = "30--36",
}

@inproceedings{yamakoshi-etal-2022-probing,
    title = "Probing {BERT}{'}s priors with serial reproduction chains",
    author = "Yamakoshi, Takateru  and
      Griffiths, Thomas  and
      Hawkins, Robert",
    editor = "Muresan, Smaranda  and
      Nakov, Preslav  and
      Villavicencio, Aline",
    booktitle = "Findings of the Association for Computational Linguistics: ACL 2022",
    month = may,
    year = "2022",
    address = "Dublin, Ireland",
    publisher = "Association for Computational Linguistics",
    url = "https://aclanthology.org/2022.findings-acl.314/",
    doi = "10.18653/v1/2022.findings-acl.314",
    pages = "3977--3992",
}

@article{
    goyal2022exposing,
    title={Exposing the Implicit Energy Networks behind Masked Language Models via Metropolis--Hastings},
    author={Kartik Goyal and Chris Dyer and Taylor Berg-Kirkpatrick},
    journal={International Conference on Learning Representations (ICLR)},
    year={2022},
}

@article{karan2025reasoning,
  title={Reasoning with sampling: Your base model is smarter than you think},
  author={Karan, Aayush and Du, Yilun},
  journal={International Conference on Learning Representations (ICLR)},
  year={2026}
}

@article{faria2024quest,
  title={QUEST: Quality-aware metropolis-hastings sampling for machine translation},
  author={Faria, Gon{\c{c}}alo R and Agrawal, Sweta and Farinhas, Ant{\'o}nio and Rei, Ricardo and de Souza, Jos{\'e} G and Martins, Andr{\'e} F},
  journal={Neural Information Processing Systems (NeurIPS)},
  year={2024}
}

@article{xiao2025flipping,
  title={Flipping against all odds: Reducing {LLM} coin flip bias via verbalized rejection sampling},
  author={Xiao, Tim Z and Zenn, Johannes and Liu, Zhen and Liu, Weiyang and Bamler, Robert and Sch{\"o}lkopf, Bernhard},
  journal={arXiv preprint arXiv:2506.09998},
  year={2025}
}

@article{arnold1998distributions,
  title={Distributions most nearly compatible with given families of conditional distributions},
  author={Arnold, Barry C and Gokhale, Dattaprabhakar V},
  journal={Test},
  volume={7},
  number={2},
  pages={377--390},
  year={1998},
  publisher={Springer}
}

@article{barker1965monte,
  title={{Monte Carlo} calculations of the radial distribution functions for a proton-electron plasma},
  author={Barker, Anthony Alfred},
  journal={Australian Journal of Physics},
  volume={18},
  number={2},
  pages={119--134},
  year={1965},
  publisher={CSIRO Publishing}
}

@article{gu2026illusion,
  title={The Illusion of Stochasticity in {LLMs}},
  author={Gu, Xiangming and De, Soham and Titsias, Michalis and Markeeva, Larisa and Veli{\v{c}}kovi{\'c}, Petar and Pascanu, Razvan},
  journal={arXiv preprint arXiv:2604.06543},
  year={2026}
}

@article{
hu2024amortizing,
title={Amortizing intractable inference in large language models},
author={Edward J Hu and Moksh Jain and Eric Elmoznino and Younesse Kaddar and Guillaume Lajoie and Yoshua Bengio and Nikolay Malkin},
journal={International Conference on Learning Representations (ICLR)},
year={2024},
}

@article{
hopkins2023can,
title={Can {LLM}s Generate Random Numbers? Evaluating {LLM} Sampling in Controlled Domains},
author={Aspen K Hopkins and Alex Renda and Michael Carbin},
journal={ICML 2023 Workshop: Sampling and Optimization in Discrete Space},
year={2023},
url={https://openreview.net/forum?id=Vhh1K9LjVI}
}

@article{kirkpatrick1983optimization,
  title={Optimization by simulated annealing},
  author={Kirkpatrick, Scott and Gelatt Jr, C Daniel and Vecchi, Mario P},
  journal={science},
  volume={220},
  number={4598},
  pages={671--680},
  year={1983},
  publisher={American association for the advancement of science}
}

@article{deleu2022bayesian,
  title={Bayesian structure learning with generative flow networks},
  author={Deleu, Tristan and G{\'o}is, Ant{\'o}nio and Emezue, Chris and Rankawat, Mansi and Lacoste-Julien, Simon and Bauer, Stefan and Bengio, Yoshua},
  journal={Uncertainty in Artificial Intelligence (UAI)},
  year={2022},
}

@article{heckerman1995learning,
  title={Learning {Bayesian} networks: The combination of knowledge and statistical data},
  author={Heckerman, David and Geiger, Dan and Chickering, David M},
  journal={Machine learning},
  volume={20},
  number={3},
  pages={197--243},
  year={1995},
  publisher={Springer}
}

@book{koller2009probabilistic,
  title={Probabilistic graphical models: principles and techniques},
  author={Koller, Daphne and Friedman, Nir},
  year={2009},
  publisher={MIT press}
}

@article{leonelli2025bnrep,
  title={{bnRep}: A repository of {Bayesian} networks from the academic literature},
  author={Leonelli, Manuele},
  journal={Neurocomputing},
  volume={624},
  pages={129502},
  year={2025},
  publisher={Elsevier}
}

@article{scutari2010learning,
  title={Learning {Bayesian} networks with the bnlearn {R} package},
  author={Scutari, Marco},
  journal={Journal of statistical software},
  volume={35},
  pages={1--22},
  year={2010}
}

@article{
gadetsky2025large,
title={Large (Vision) Language Models are Unsupervised In-Context Learners},
author={Artyom Gadetsky and Andrei Atanov and Yulun Jiang and Zhitong Gao and Ghazal Hosseini Mighan and Amir Zamir and Maria Brbic},
journal={International Conference on Learning Representations (ICLR)},
year={2025},
}

@article{wen2025unsupervised,
  title={Unsupervised elicitation of language models},
  author={Jiaxin Wen and Zachary Ankner and Yanda Chen and Arushi Somani and Peter Hase and Fabien Roger and Samuel Marks and Jacob Goldman-Wetzler and Linda Petrini and Henry Sleight and Collin Burns and He He and Shi Feng and Ethan Perez and Jan Leike},
  journal={arXiv preprint arXiv:2506.10139},
  year={2025}
}

@article{brown2020language,
  title={Language models are few-shot learners},
  author={Tom B. Brown and Benjamin Mann and Nick Ryder and Melanie Subbiah and Jared Kaplan and Prafulla Dhariwal and Arvind Neelakantan and Pranav Shyam and Girish Sastry and Amanda Askell and Sandhini Agarwal and Ariel Herbert-Voss and Gretchen Krueger and Tom Henighan and Rewon Child and Aditya Ramesh and Daniel M. Ziegler and Jeffrey Wu and Clemens Winter and Christopher Hesse and Mark Chen and Eric Sigler and Mateusz Litwin and Scott Gray and Benjamin Chess and Jack Clark and Christopher Berner and Sam McCandlish and Alec Radford and Ilya Sutskever and Dario Amodei},
  journal={Neural Information Processing Systems (NeurIPS)},
  year={2020}
}

@inproceedings{torroba-hennigen-kim-2023-deriving,
    title = "Deriving Language Models from Masked Language Models",
    author = "Torroba Hennigen, Lucas  and
      Kim, Yoon",
    editor = "Rogers, Anna  and
      Boyd-Graber, Jordan  and
      Okazaki, Naoaki",
    booktitle = "Proceedings of the 61st Annual Meeting of the Association for Computational Linguistics (Volume 2: Short Papers)",
    month = jul,
    year = "2023",
    address = "Toronto, Canada",
    publisher = "Association for Computational Linguistics",
    url = "https://aclanthology.org/2023.acl-short.99/",
    doi = "10.18653/v1/2023.acl-short.99",
    pages = "1149--1159"
}

@inproceedings{lu-etal-2022-fantastically,
    title = "Fantastically Ordered Prompts and Where to Find Them: Overcoming Few-Shot Prompt Order Sensitivity",
    author = "Lu, Yao  and
      Bartolo, Max  and
      Moore, Alastair  and
      Riedel, Sebastian  and
      Stenetorp, Pontus",
    editor = "Muresan, Smaranda  and
      Nakov, Preslav  and
      Villavicencio, Aline",
    booktitle = "Proceedings of the 60th Annual Meeting of the Association for Computational Linguistics (Volume 1: Long Papers)",
    month = may,
    year = "2022",
    address = "Dublin, Ireland",
    publisher = "Association for Computational Linguistics",
    url = "https://aclanthology.org/2022.acl-long.556/",
    doi = "10.18653/v1/2022.acl-long.556",
    pages = "8086--8098",
}

@inproceedings{zhu2024recovering,
  title={Recovering Mental Representations from Large Language Models with Markov Chain Monte Carlo},
  author={Zhu, Jian-Qiao and Yan, Haijiang and Griffiths, Tom},
  booktitle={Proceedings of the Annual Meeting of the Cognitive Science Society},
  volume={46},
  year={2024}
}

@inproceedings{zhu2025eliciting,
  title={Eliciting the Priors of Large Language Models using Iterated In-Context Learning},
  author={Zhu, Jian-Qiao and Griffiths, Tom},
  booktitle={Proceedings of the Annual Meeting of the Cognitive Science Society},
  volume={47},
  year={2025}
}

@article{borisov2023language,
    title={Language Models are Realistic Tabular Data Generators}, 
    author={Vadim Borisov and Kathrin Seßler and Tobias Leemann and Martin Pawelczyk and Gjergji Kasneci},
    year={2023},
    journal={International Conference on Learning Representations (ICLR)}
}

@article{liu-etal-2024-lost,
    title = "Lost in the Middle: How Language Models Use Long Contexts",
    author = "Liu, Nelson F.  and
      Lin, Kevin  and
      Hewitt, John  and
      Paranjape, Ashwin  and
      Bevilacqua, Michele  and
      Petroni, Fabio  and
      Liang, Percy",
    journal = "Transactions of the Association for Computational Linguistics",
    volume = "12",
    year = "2024",
    address = "Cambridge, MA",
    publisher = "MIT Press",
    url = "https://aclanthology.org/2024.tacl-1.9/",
    doi = "10.1162/tacl_a_00638",
    pages = "157--173",
}

@article{geman1984stochastic,
  title={Stochastic relaxation, {Gibbs} distributions, and the {Bayesian} restoration of images},
  author={Geman, Stuart and Geman, Donald},
  journal={IEEE Transactions on pattern analysis and machine intelligence},
  number={6},
  pages={721--741},
  year={1984},
  publisher={IEEE}
}

@book{mackay2003information,
title={Information theory, inference and learning algorithms},
author={MacKay, David JC},
year={2003},
publisher={Cambridge university press}
}

@article{
swamy2026all,
title={All Roads Lead to Likelihood: The Value of Reinforcement Learning in Fine-Tuning},
author={Gokul Swamy and Sanjiban Choudhury and Wen Sun and Steven Wu and Drew Bagnell},
journal={International Conference on Learning Representations (ICLR)},
year={2026},
}

@article{xu2024reprompting,
    title={Reprompting: Automated Chain-of-Thought Prompt Inference Through {Gibbs} Sampling}, 
    author={Weijia Xu and Andrzej Banburski-Fahey and Nebojsa Jojic},
    year={2024},
    journal={International Conference on Machine Learning (ICML)}
}

@article{chickering2003optimal,
author = {Chickering, David Maxwell},
title = {Optimal structure identification with greedy search},
year = {2003},
volume = {3},
journal = {Journal of Machine Learning Research},
month = 3,
pages = {507--554},
}

@book{spirtes,
author = {Spirtes, Peter and Glymour, Clark and Scheines, Richard},
year = {1993},
title = {Causation, Prediction, and Search}
}

@article{lorch2022amortized,
  title={{DiBS}: Differentiable {Bayesian} Structure Learning},
  author={Lorch, Lars and Rothfuss, Jonas and Krause, Andreas and Scholkopf, Bernhard},
  journal={Neural Information Processing Systems (NeurIPS)},
  year={2022}
}

@article{viinikka2021towards,
  title={Towards scalable {Bayesian} learning of causal {DAGs}},
  author={Viinikka, Jussi and Hyttinen, Antti and Pensar, Johan and Koivisto, Mikko},
  journal={Neural Information Processing Systems (NeurIPS)},
  year={2021}
}

@article{madigan1995bayesian,
  title={Bayesian graphical models for discrete data},
  author={Madigan, David and York, Jeremy and Allard, Denis},
  journal={International Statistical Review},
  year={1995},
    volume={63},
    number={2},
    pages={215--232},
}

@article{giudici2003improving,
  title={Improving {Markov chain Monte Carlo} model search for data mining},
  author={Giudici, Paolo and Castelo, Robert},
  journal={Machine Learning},
  volume={50},
  pages={127--158},
  year={2003},
}

@article{heckerman2000dependency,
  title={Dependency networks for inference, collaborative filtering, and data visualization},
  author={Heckerman, David and Chickering, David Maxwell and Meek, Christopher and Rounthwaite, Robert and Kadie, Carl},
  journal={Journal of Machine Learning Research},
  volume={1},
  number={Oct},
  pages={49--75},
  year={2000}
}

@article{selby2025had,
  title={Had enough of experts? quantitative knowledge retrieval from large language models},
  author={Selby, David and Iwashita, Yuichiro and Spriestersbach, Kai and Saad, Mohammad and Bappert, Dennis and Warrier, Archana and Mukherjee, Sumantrak and Kise, Koichi and Vollmer, Sebastian},
  journal={Stat},
  volume={14},
  number={2},
  pages={e70054},
  year={2025},
  publisher={Wiley Online Library}
}

@article{darvariu2024large,
  title={Large language models are effective priors for causal graph discovery},
  author={Darvariu, Victor-Alexandru and Hailes, Stephen and Musolesi, Mirco},
  journal={arXiv preprint arXiv:2405.13551},
  year={2024}
}

@article{capstick2024autoelicit,
  title={{AutoElicit}: Using large language models for expert prior elicitation in predictive modelling},
  author={Capstick, Alexander and Krishnan, Rahul G and Barnaghi, Payam},
  journal={International Conference on Machine Learning (ICML)},
  year={2025}
}

@article{arai2025many,
  title={How many patients could we save with {LLM} priors?},
  author={Arai, Shota and Selby, David and Vargo, Andrew and Vollmer, Sebastian},
  journal={arXiv preprint arXiv:2509.04250},
  year={2025}
}

@inproceedings{
gouk2024automated,
title={Automated Prior Elicitation from Large Language Models for {Bayesian} Logistic Regression},
author={Henry Gouk and Boyan Gao},
booktitle={AutoML Conference 2024 (Workshop Track)},
year={2024},
url={https://openreview.net/forum?id=euLzlnU7gz}
}

@article{du2024principled,
  title={Principled Gradient-based Markov Chain Monte Carlo for Text Generation},
  author={Li Du and Afra Amini and Lucas Torroba Hennigen and Xinyan Velocity Yu and Jason Eisner and Holden Lee and Ryan Cotterell},
  journal={International Conference on Machine Learning (ICML)},
  year={2024}
}

@article{miao2019cgmh,
  title={{CGMH}: Constrained Sentence Generation by {Metropolis-Hastings} Sampling},
    author={Ning Miao and Hao Zhou and Lili Mou and Rui Yan and Lei Li},
    journal={AAAI Conference on Artificial Intelligence},
    year={2019},
}

@article{qin2022cold,
  title={{COLD} decoding: Energy-based constrained text generation with {Langevin} dynamics},
  author={Lianhui Qin and Sean Welleck and Daniel Khashabi and Yejin Choi},
  journal={Neural Information Processing Systems (NeurIPS)},
  year={2022}
}

@inproceedings{kumar-etal-2022-gradient,
    title = "Gradient-based Constrained Sampling from Language Models",
    author = "Kumar, Sachin  and
      Paria, Biswajit  and
      Tsvetkov, Yulia",
    editor = "Goldberg, Yoav  and
      Kozareva, Zornitsa  and
      Zhang, Yue",
    booktitle = "Proceedings of the 2022 Conference on Empirical Methods in Natural Language Processing",
    month = dec,
    year = "2022",
    address = "Abu Dhabi, United Arab Emirates",
    publisher = "Association for Computational Linguistics",
    url = "https://aclanthology.org/2022.emnlp-main.144/",
    doi = "10.18653/v1/2022.emnlp-main.144",
    pages = "2251--2277",
}

@inproceedings{mireshghallah-etal-2022-mix,
    title = "Mix and Match: Learning-free Controllable Text Generationusing Energy Language Models",
    author = "Mireshghallah, Fatemehsadat  and
      Goyal, Kartik  and
      Berg-Kirkpatrick, Taylor",
    editor = "Muresan, Smaranda  and
      Nakov, Preslav  and
      Villavicencio, Aline",
    booktitle = "Proceedings of the 60th Annual Meeting of the Association for Computational Linguistics (Volume 1: Long Papers)",
    month = may,
    year = "2022",
    address = "Dublin, Ireland",
    publisher = "Association for Computational Linguistics",
    url = "https://aclanthology.org/2022.acl-long.31/",
    doi = "10.18653/v1/2022.acl-long.31",
    pages = "401--415",
}
